\definecolor{mygray}{gray}{.9}
\newcommand{\defeq}{\mbox {$  \ \stackrel{\Delta}{=} $}}
\def\eqref#1{equation~\ref{#1}}
\def\1{\bm{1}}
\DeclareMathAlphabet{\mathsfit}{\encodingdefault}{\sfdefault}{m}{sl}
\SetMathAlphabet{\mathsfit}{bold}{\encodingdefault}{\sfdefault}{bx}{n}
\theoremstyle{definition}
\newtheorem{theorem}{Theorem}
\newtheorem{lemma}{Lemma}
\newtheorem{definition}{Definition}
\newtheorem{remark}{Remark}
\title{Towards Undistillable Models by Minimizing Conditional Mutual Information}
\author{\name Linfeng Ye\thanks{Authors contributed equally.} \email linfeng.ye@mail.utoronto.ca\\
      Edward S. Rogers Sr. Department of Electrical and Computer Engineering\\
      University of Toronto
      \AND
      \name Shayan Mohajer Hamidi\footnotemark[1] \email smohajer@stanford.edu \\
      Department of Electrical Engineering\\
      Stanford University
      \AND
      \name En-Hui Yang \email ehyang@uwaterloo.ca\\
      Department of Electrical and Computer Engineering\\
      University of Waterloo}
\begin{document}

\maketitle

\begin{abstract}
A deep neural network (DNN) is said to be undistillable if, when used as a black-box input-output teacher, it cannot be distilled through knowledge distillation (KD). In this case, the distilled student (referred to as the knockoff student) does not outperform a student trained independently with label smoothing (LS student) in terms of prediction accuracy. To protect intellectual property of DNNs, it is desirable to build undistillable DNNs. To this end, it is first observed that an undistillable DNN may have the trait that each cluster of its output probability distributions in response to all sample instances with the same label should be highly concentrated to the extent that each cluster corresponding to each label should ideally collapse into one probability distribution. Based on this observation and by measuring the concentration of each cluster in terms of conditional mutual information (CMI), a new training method called CMI minimized (CMIM) method is proposed, which trains a DNN by jointly minimizing the conventional cross entropy (CE) loss and the CMI values of all temperature scaled clusters across the entire temperature spectrum. The resulting CMIM model is shown, by extensive experiments, to be undistillable by all tested KD methods existing in the literature. That is, the knockoff students distilled by these KD methods from the CMIM model underperform the respective LS students. In addition, the CMIM model is also shown to performs better than the model trained with the CE loss alone in terms of their own prediction accuracy. The code for the paper is publicly available at \url{https://anonymous.4open.science/r/CMIM-605C/README.md}. 
\end{abstract}

\section{Introduction} \label{sec:intro}

Originally aiming for model compression, knowledge distillation \citep{buciluǎ2006model,hinton2015distilling} (KD) has received significant attention from both academia and industry in recent years due to its remarkable effectiveness. The essence of KD is to transfer the knowledge of a pre-trained
large model (teacher) to a smaller model (student). Building on the work of \citet{hinton2015distilling}, numerous follow-up works have endeavored to enhance the performance of KD \citep{romero2014fitnets,anil2018large,park2019relational} and to gain deeper insights into why distillation is effective \citep{phuong2019towards,mobahi2020self,ye2024bayes,allen2020towards,menon2021statistical,borup2021even}.

In the scenario where the teacher does not want its knowledge to be transferred, however, KD is undesirable and indeed poses a threat to intellectual property (IP) of the teacher \citep{shokri2015privacy}. Developing and training a high-quality large DNN requires significant investments of time, effort, finances, and resources, including extensive data annotation and computational infrastructure. Developers of the large DNN may want to prevent the knowledge of the large DNN from being transferred by their competitors. However, once the large DNN is released as a ``black box'', anyone can apply a logit-based KD method (or equivalently a distribution-based KD method \citep{zheng2024knowledge}) to distill the DNN as a teacher. The goal is to train a student, referred to as a knockoff student in the context of DNN IP protection, that mimics the teacher's behavior to gain competitive advantages. As such, in this case, it would be desirable for the developers to build the large DNN so that it is undistillable.  The question, of course, is how. 

Before delving deeper into the above question, let us first clarify what we mean by saying that a DNN is undistillable. At this point, we invoke the concept of distillable DNN introduced recently in \citet{yang2024markov}:

\begin{definition} \label{def:Distillability}
[Distillability of a DNN \citep{yang2024markov}] When used as a black-box input-output teacher, a DNN is said to be distillable\footnote{There are two reasons for us to adopt this definition of distillability. First,  as shown theoretically in \citet{zheng2024knowledge}, knowledge distillation reduces to label smoothing in the limit as the temperature approaches infinity. Therefore, if the teacher is distillable, the distilled student should perform no worse than the LS student. Second, if a knockoff student cannot outperform the LS student, then there is no incentive for the teacher to be leveraged since the LS student can be trained on its own.} with respect to a student if there exists a KD method which, when applied to the teacher and student, yields a knockoff student outperforming the student trained alone with label smoothing (LS student) in terms of prediction accuracy. 
\end{definition}

% \vspace{-1cm}
Therefore, a DNN is undistillable if no knockoff student can outperform the respective LS student regardless of which logit-based KD method is used. Since there are so many logit-based KD methods and so many students, what type of a DNN is undistillable? In comparison with the training of LS student, \Cref{def:Distillability} provides some insight into a trait  that an undistillable DNN may possess. For each label, consider the cluster of the output probability distributions of the DNN in response to all input sample instances with that label. If each cluster corresponding to each label is highly concentrated to the extent that all probability distributions within the cluster more or less collapse into one probability distribution, then the student training within KD is similar to that of the respective LS student regardless of which logit-based KD method and which student are applied. In this case, one would expect that no knockoff student would perform significantly better than the respective LS student. Therefore, a DNN possessing this trait will likely be undistillable.

From a theoretical perspective, KD relies on the diversity within the teacher’s predicted distributions to convey richer contextual information to the student. The greater the diversity in the cluster of predictions for a given class, the more nuanced guidance the student can receive (see \citet{ye2024bayes}). Conversely, when the teacher’s outputs become highly concentrated---meaning predictions for inputs with the same label converge to a single probability vector---this contextual richness is lost. The student, in this case, receives largely redundant information, rendering the distillation process as ineffective as direct training with label smoothing.

To quantify this notion of concentration, we turn to conditional mutual information (CMI) \citep{10900607}. Specifically, let $X$ denote the random input sample to the DNN, and $Y$ be the ground truth label of $X$. Let $\hat{Y}$ denote  the random label predicted by the DNN in response to input $X$. It was shown in \citet{10900607} that for each label $y$, the label specific CMI $\mathrm{I}(X;\hat{Y}|~Y=y)$ measures the concentration of the cluster corresponding to label $y$, and the CMI $\mathrm{I}(X;\hat{Y}|~Y)$ measures the average concentration across all clusters. To build an undistillable DNN, one then is motivated to minimize jointly the conventional cross entropy (CE) loss and the CMI $\mathrm{I}(X;\hat{Y}|~Y)$.

In this paper, we will go one step further. In KD \citep{hinton2015distilling}, temperature scaling of logits is often applied. It was shown in \citet{zheng2024knowledge} that logit temperature scaling with temperature $T$ can be equivalently achieved by power transform of the output probability distribution with power $\alpha = 1/T$. Further, it was demonstrated in \citet{ye2024bayes} that the purpose of temperature scaling or power transform is to enlarge the CMI values of temperature scaled (or power transformed) clusters, and enlarging CMI values in turn improves the performance of distilled students. Since here we want to achieve the opposite, we want to make sure that all CMI values of all power transformed clusters can be made small. To this end, we further extend  the label specific CMI $\mathrm{I}(X;\hat{Y}|~Y=y)$  and the CMI $\mathrm{I}(X;\hat{Y}|~Y)$ to $\mathrm{I}(X;\hat{Y}^{\alpha} |~Y=y)$  and  $\mathrm{I}(X;\hat{Y}^{\alpha[Y]}|~Y)$, respectively,  so that $\mathrm{I}(X;\hat{Y}^{\alpha} |~Y=y)$ measures the concentration of the power transformed cluster corresponding to label $y$ with power $\alpha$, and $\mathrm{I}(X;\hat{Y}^{\alpha[Y]}|~Y)$ measures the average concentration across all power transformed clusters with power $\alpha[Y]$, where different clusters may be power transformed with different power $\alpha$. 
Notably, allowing separate temperature scaling (i.e., power transformation) for each class is essential, as an adaptive adversary may apply class-specific scaling to selectively alter cluster concentration and improve the knockoff student’s performance.

Based on the above discussion and towards building undistillable DNNs, we then propose a new training method called  CMI minimized method, which trains a DNN by jointly minimizing the CE loss and all CMI values of all power transformed clusters, i.e., jointly minimizing the CE loss and $\mathrm{I}(X;\hat{Y}^{\alpha[Y]}|~Y), ~ \forall \alpha[Y] >0$. 

The resulting trained DNN is referred to as the CMI minimized (CMIM) DNN.  The contributions of the paper are summarized as follows:

\noindent $\bullet$ An insight is provided that in order for a DNN to be undistillable, it is desirable for the DNN to possess the trait that each cluster of the DNN's output probability distributions corresponding to each label is highly concentrated to the extent that all probability distributions within the cluster more or less collapse into one probability distribution close to the one-hot probability vector of that label.

\noindent $\bullet$ We extend  the label specific CMI $\mathrm{I}(X;\hat{Y}|~Y=y)$  and the CMI $\mathrm{I}(X;\hat{Y}|~Y)$ to $\mathrm{I}(X;\hat{Y}^{\alpha} |~Y=y)$  and  $\mathrm{I}(X;\hat{Y}^{\alpha[Y]}|~Y)$, respectively,  so that $\mathrm{I}(X;\hat{Y}^{\alpha} |~Y=y)$ measures the concentration of the power transformed cluster corresponding to label $y$ with power $\alpha$, and $\mathrm{I}(X;\hat{Y}^{\alpha[Y]}|~Y)$ measures the average concentration across all power transformed clusters with power $\alpha[Y]$, where different clusters may be power transformed with different power $\alpha$.

\noindent $\bullet$ We develop a novel training method dubbed CMI minimized method to train a DNN by jointly minimizing the CE loss and all CMI values of all power transformed clusters with the resulting trained DNN referred to as the CMIM DNN. 

\noindent $\bullet$ To the best of our knowledge, our method is the first in the literature capable of training undistillable DNNs that remain robust against a wide range of KD methods. Furthermore, for the notion of undistillability, we are the first to employ the formal definition introduced in \citet{yang2024markov}.

\noindent $\bullet$ We show, by extensive experiments over three popular image classification datasets, namely CIFAR-100 \citep{CIFAR100}, TinyImageNet \citep{tinyiamgenet} and ImageNet \citep{imagenet}, that CMIM DNNs have very small CMI values and are indeed undistillable by all tested KD methods existing in the literature. That is, the knockoff students distilled by these KD methods from the CMIM models underperform the respective LS students. On the other hand, models trained by defense training methods proposed in the literature are all distillable. 

\noindent $\bullet$ In addition, we show that the CMIM models achieve a higher classification accuracy compared to those trained with the conventional CE loss.

\section{Related Works} \label{sec:related}
In this section, we mention some defense methods against  the threat posed by knockoff students attempting to steal the IP of pre-trained DNNs via logit-based KD methods. For a thorough review of related works, including detailed discussions about recent logit-based KD methods, please refer to \Cref{sec:app_rel}. These defense methods can be mainly categorized into two groups: (i) model stealing resistant training methods that specifically train DNNs to reduce the accuracy of knockoff students while maintaining the original classification accuracy of the model \citep{nasty,SNT}; and (ii) post-training defense methods that perform minimal perturbations to the pre-trained model's predictions to mislead the knockoff student \citep{RSP,MAD,APGP}. Nonetheless, in \Cref{sec:exp}, we will show that models trained by all these defense methods  are indeed distillable.

\section{Notation and Preliminaries}

\subsection{Notation}

The set of real numbers is denoted by $\mathbb{R}$. Vectors are denoted by bold-face  letters (e.g., $\boldsymbol{w}$). The $i$-th element of vector $\boldsymbol{w}$ is denoted by $\boldsymbol{w}[i]$. For two vectors $\boldsymbol{u},\boldsymbol{v} \in \mathbb{R}^C$, the inequality $\boldsymbol{u} \leq \boldsymbol{v}$ implies that $\boldsymbol{u}[i] \leq \boldsymbol{v}[i]$, $\forall i \in [C]$. For a positive integer $K$, let $[K]\triangleq \left\{1,...K\right\}$. Assume that there are $C$ class labels with $[C]$ as the set of class labels. Let ${\cal P}([C])$ denote the set of all $C$ dimensional probability distributions. For any two probability distributions $P_1, P_2 \in {\cal P}([C])$, the CE and Kullback-Leibler (KL) divergence between $P_1$ and $P_2$ are denoted by $\mathsf{H}(P_1, P_2)$ and $\mathrm{KL}(P_1, P_2)$, respectively. For any $y\in[C]$ and $P\in {\cal P}([C])$, write the CE of the one-hot probability distribution corresponding to $y$ and $P$ as $\mathsf{H}(y, P)$. 

For any differentiable function $ f(\cdot)$, $\nabla_{\boldsymbol{w}} f(\cdot)$ denotes its gradient vector w.r.t. vector $\boldsymbol{w}$. 

% The directional derivative of $f(\cdot)$ in direction of $\boldsymbol{\gamma}$ is denoted by $\mathsf{D}_{\boldsymbol{\gamma}}(\cdot)$

For any pair of random variables $(X,Y)$, denote its joint probability distribution by $P_{X, Y}(x,y)$ or simply $P(x,y)$ whenever there is no ambiguity, the marginal distribution of $Y$ by $P_Y (y)$, and the conditional distribution of $Y$ given $X=x$ by $P_{Y|X}(\cdot|x)$. 
The mutual information between two random variables $X$ and $Y$ is denoted by $\mathrm{I}(X,Y)$, and the CMI of $X$ and $Y$ given a third random variable $Z$ is 
$\mathrm{I}(X,Y|Z)$.

We regard a classification  DNN as a mapping from raw data $x \in \mathbb{R}^{d}$ to a probability distribution $q_x \in {\cal P} ([C])$. Given a  DNN:  $x \in \mathbb{R}^{d} \to  q_x$, let $\boldsymbol{\theta}$ denote its weight vector consisting of all its connection weights; whenever there is no ambiguity, we also write $q_x$ as $q_{x, \boldsymbol{\theta}}$. 

\subsection{Label Smoothing} \label{sec:LS}
Label smoothing (LS) \citep{pereyra2017regularizing} is a regularization technique that prevents peaked output probability distributions, leading to better generalization, by minimizing the objective function:
\begin{align}
{\cal L}^{LS} = (1-\epsilon)\mathsf{H}(y, q_x) + \epsilon \mathsf{H}(u, q_x),
\end{align}
where $u$ is the uniform distribution over $C$ classes, and $\epsilon$ controls the strength of the regularization.

\subsection{Power Transform of Probability Distribution}
In a ``black-box'' teacher setting, where only the output probability vectors (and not the logits) of the teacher are accessible to the public, applying temperature scaling directly over the logits of the teacher is not feasible in training knockoff students. In this case, KD training can resort to applying ``power transformation of probability distribution'' directly to the output probability vectors \citep{zheng2024knowledge}. Specifically, given  $P\in {\cal P([C])}$, and a non-negative real number $\alpha$, the power transform of $P$ is another probability distribution define as
\begin{align}
P^{\alpha} [i]= \frac{(P[i])^\alpha}{\sum_{j\in [C]}(P[j])^\alpha},\quad \forall i \in [C].    
\end{align}

It is not hard to verify that the power transformed probability distribution $P^{\alpha}$ is equal to the softmax of  the logits scaled by temperature $T = 1/\alpha$. Therefore, temperature scaling can be equivalently operated directly on the output probability distribution through power transform.

\subsection{CMI value of a DNN}
As discussed in \citet{10900607}, for a classifier $f: x \in \mathbb{R}^{d} \to  q_x$, let $\hat{Y}$ be the random label predicted by the $f$ with probability $q_X[\hat{Y}]$ in respond to the input $X$. For each cluster corresponding to label $y \in [C]$, we have
% \small
\begin{align}
&\mathrm{I}(X; \hat{Y}|Y=y) = \sum_x P_{X|Y}(x|y)\left [\sum_{i=1}^C P_{\hat{Y}|XY} ( \hat{Y}=i |x, y) \ln \frac{P_{\hat{Y}|XY} (\hat{Y}=i|x, y)}{P_{\hat{Y}|Y}(\hat{Y}=i|Y=y)}\right ]\\
 & \qquad = \mathbb{E}_{X|Y}\left [ \left ( \sum_{i=1}^C q_X[i] \ln \frac{q_X[i]}{P_{\hat{Y}|Y}(\hat{Y}=i|Y=y)}\right ) |Y=y\right] =\mathbb{E}_{X|Y} \left[\mathrm{KL}\left( q_X, s_y \right)|Y=y\right],\label{eq:CMI1}
\end{align}
% \normalsize
where  $ P_{\hat{Y}|XY} ( \hat{Y}=i |x, y) = q_x [i] $ follows from the Markov chain $Y \to X \to \hat{Y}$, and 
$s_y = P_{\hat{Y}|Y}(\cdot |y) = \mathbb{E}_{X|Y}\left[ q_X|Y=y\right]$. 
$ \mathrm{I}(X; \hat{Y}|Y=y)$ measures the concentration of the cluster corresponding to label $y \in [C]$. Averaging over all clusters corresponding to all labels $y$, we get 
 \begin{align}
    & \mathrm{I}(X; \hat{Y}|Y) = \sum_{y \in [C]} P_Y (y) \mathrm{I}(X; \hat{Y}|Y=y) = \mathbb{E}_{XY} \left[\mathrm{KL}\left( q_X, s_Y \right) \right].\label{eq:CMI1-2}
 \end{align}
$ \mathrm{I}(X; \hat{Y}|Y)$ measures the average concentration across all clusters.

When the distribution $P_{X,Y}$ is unknown, we can approximate the CMI of $f$ by its empirical value from a data sample (a training dataset or mini-batch thereof) ${\cal D} = \{ (x_i, y_i) \}_{i=1}^m$. To this end, let ${\cal D}_y = \left\{ 1\leq j \leq m: y_j=y \right\}$. Denote the size of ${\cal D}_y $ by $ |{\cal D}_y |$.  The empirical values of each label specific CMI and the CMI can be calculated as follows
% \small
\begin{align}
&\mathrm{I}^{emp}(X; \hat{Y}|Y=y) = \frac{1}{|{\cal D}_y |  }  \sum_{i \in {\cal D}_y} \mathrm{KL}(q_{x_i}, s_y^{emp}),\\
& \mathrm{I}^{emp}(X; \hat{Y}|Y) = \frac{1}{m  }  \sum_{i=1}^m  \mathrm{KL}(q_{x_i}, s_{y_i}^{emp}), \\
\text{where\quad} &s_y^{emp} = \frac{1}{|{\cal D}_y|} \sum_{i \in {\cal D}_y } q_{x_i},  \forall y \in [C]. \label{eq:emp_CMI}
\end{align} 
% \vspace{-0.5in}

\section{CMI Minimized Method}

In this section, we present our CMI minimized method. We begin with extending $ \mathrm{I}(X; \hat{Y}|Y=y)$  and $ \mathrm{I}(X; \hat{Y}|Y)$ to the case of power transformed clusters. 

\subsection{Information Quantities for Power Transformed Clusters} \label{sec:motiv}

Consider a classification DNN: $f: x \in \mathbb{R}^{d} \to  q_x$ which maps  input sample instances $x$ with different labels into clusters of probability distributions $q_x$ in the space ${\cal P} ([C])$, with one cluster per label. For each label $y \in [C]$, apply the power transform with power $\alpha$ to each probability distribution $q_x$ within the cluster corresponding to the label $y$. Then, we obtain a power transformed cluster. To measure the concentration of the power transformed cluster, we extend $ \mathrm{I}(X; \hat{Y}|Y=y)$ to the following information quantity
 % \small
\begin{align}
&\mathrm{I}(X; \hat{Y}^{\alpha} |Y=y)  =\mathbb{E}_{X|Y} \left[\mathrm{KL}\left( q_X^{\alpha}, s_{y,\alpha} \right)|Y=y\right],\label{eq:CMI1-p}
\end{align}
% \normalsize
where $s_{y, \alpha} =  \mathbb{E}_{X|Y}\left[ q_X^{\alpha}|Y= y\right]$. Note that if we regard $ \hat{Y}^{\alpha} $ as the random label predicted by $f$ with probability $q_X^{\alpha} ( \hat{Y}^{\alpha}) $ in response to the input sample $X$, i.e., given $X$, $ \hat{Y}^{\alpha} $ is equal to a label $c$ with probability $q_X^{\alpha} (c ) $, $\forall c \in [C]$, then $ \mathrm{I}(X; \hat{Y}^{\alpha} |Y=y) $ is exactly the CMI between $X$ and $ \hat{Y}^{\alpha} $ given $Y =y$. Thus, $ \mathrm{I}(X; \hat{Y}^{\alpha} |Y=y) $ measures the concentration of the power transformed cluster corresponding to $y$. 

Now, we go one step further and allow different clusters to be power transformed with different powers. Suppose that the cluster corresponding to label $y$ is power transformed with power $\alpha [y]$. Let $ \hat{Y}^{\alpha [Y]} $ be the random label predicted by $f$ with probability $q_X^{\alpha[Y]} ( \hat{Y}^{\alpha[Y]}) $ in response to the input sample $X$ given $Y$. That is, given $Y=y$ and $X=x$, $ \hat{Y}^{\alpha [Y]} $ is equal to $c$ with probability $  q_x^{\alpha[y]} (c)$ for any $c \in [C]$. We can then extend $ \mathrm{I}(X; \hat{Y}|Y)$ to $ \mathrm{I}(X; \hat{Y}^{\alpha[Y]}|Y)$
 % \small
\begin{align}
\mathrm{I}(X; \hat{Y}^{\alpha[Y]} |Y)  & =\mathbb{E}_{XY} \left[\mathrm{KL}\left( q_X^{\alpha[Y]}, s_{Y, \alpha[Y]} \right)\right],\label{eq:CMI2-p} \\
     \qquad & = \sum_{y \in [C]} P_Y (y) \left [   \mathbb{E}_{X|Y} \left[\mathrm{KL}\left( q_X^{\alpha[y]}, s_{y, \alpha[y]} \right)|Y=y\right]     \right ] \label{eq:CMI3-p} \\
     \qquad & = \sum_{y \in [C]} P_Y (y) \mathrm{I}(X; \hat{Y}^{\alpha[y]} |Y=y),  \label{eq:CMI4-p}
\end{align}
% \normalsize
where for each $y \in [C]$, 
\begin{align}
     s_{y,\alpha[y]} &= P_{ \hat{Y}^{\alpha[Y]} |Y} (\cdot |y) = \sum_{x} P_{X|Y} (x | y) q_x^{\alpha[y]} =   \mathbb{E}_{X|Y}\left[ q_X^{\alpha[y]}|Y= y\right]. \label{eq:CMI6-p}
\end{align}
Note that $ \mathrm{I}(X; \hat{Y}^{\alpha[Y]}|Y)$ is exactly the CMI between $X$ and $\hat{Y}^{\alpha[Y]}$ given $Y$ and measures the average concentration across all power transformed clusters with power function $\alpha[Y]$.
However, $Y$, $X$, and $ \hat{Y}^{\alpha[Y]}  $ do not form a Markov chain anymore.

When the distribution $P_{X,Y}$ is unknown, we can approximate $ \mathrm{I}(X; \hat{Y}^{\alpha[Y]}|Y=y)$ and $ \mathrm{I}(X; \hat{Y}^{\alpha[Y]}|Y)$ by their respective  empirical values from a data sample (a training dataset or mini-batch thereof) ${\cal D} = \{ (x_i, y_i) \}_{i=1}^m$: 
\begin{align}
&\mathrm{I}^{emp}(X; \hat{Y}^{\alpha[Y]} |Y=y) = \frac{1}{|{\cal D}_y |  }  \sum_{i \in {\cal D}_y} \mathrm{KL}(q_{x_i}^{\alpha[y]}, s_{y, \alpha[y]}^{ emp}), \label{eq:emp_CMI1-p} \\
& \mathrm{I}^{emp}(X; \hat{Y}^{\alpha[Y]} |Y) = \frac{1}{m  }  \sum_{i=1}^m  \mathrm{KL}(q_{x_i}^{\alpha[y_i]}, s_{y_i, \alpha[y_i]}^{ emp}), \label{eq:emp_CMI2-p}\\
\text{where\quad} & s_{y, \alpha[y]}^{ emp} = \frac{1}{|{\cal D}_y|} \sum_{i \in {\cal D}_y } q_{x_i}^{\alpha[y]},  \forall y \in [C]. \label{eq:emp_CMI3-p}
\end{align} 

\begin{wrapfigure}{h}{0.34\textwidth}
\begin{center}
% \vskip -0.3in
\includegraphics[width=0.33\textwidth]{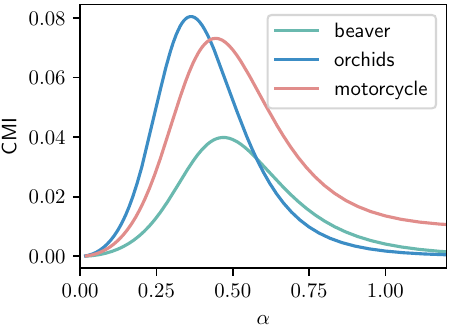}
\end{center}
% \vskip -0.2in
\caption{The CMI value $\mathrm{I}(X; \hat{Y}^{\boldsymbol{\alpha}[Y]}|Y=y)$ for three randomly selected classes $y=\{\text{beaver, orchids, motorcycle}\}$ vs the power transform factor $\alpha$; the model is ResNet-50 pre-trained  on CIFAR-100.  As observed, as $\alpha$ grows, the CMI value becomes larger, peaks, and then gradually becomes smaller.}
% \vskip -0.9in
\label{fig:CmiPerClassVsFactor}
\end{wrapfigure}

As discussed in \Cref{sec:intro}, an undistillable DNN should exhibit the trait that each of these clusters is highly concentrated and ideally collapses into a single probability distribution that closely resembles the one-hot probability vector for that label.

\begin{remark}
While we leverage the concept of CMI from \citet{10900607}, the way it is calculated in our work significantly differs from how it is calculated in \citet{10900607}. In \citet{10900607}, $ \mathrm{I}(X; \hat{Y}|Y)$ is calculated under the assumption of a Markov chain $Y \to X \to \hat{Y}$. In contrast, we quantify cluster compactness using $\mathrm{I}(X; \hat{Y}^{\alpha[Y]} |Y)$, where $\hat{Y}^{\alpha[Y]}$ explicitly depends on $Y$, violating the Markov assumption. Moreover, minimizing $\mathrm{I}(X; \hat{Y}^{\alpha[Y]} |Y)$ over all possible values of $\alpha$ introduces additional challenges, which we address in the next subsection.
\end{remark}

\subsection{Framework for Minimizing CMI Values of Power Transformed Clusters} \label{sec:method}

Towards building an undistillable DNN, we now train a DNN $f: x \in \mathbb{R}^{d} \to  q_x$ by jointly minimizing the CE loss and all CMI values of all power transformed clusters. Let 
 \[ \boldsymbol{\alpha} = \big[\alpha[1], \alpha[2], \dots, \alpha [C] \big], \]
 and write each $q_x$ as $q_{x, \theta}$. In our CMI minimized method, the objective function we want to minimize is
  \begin{align} 
&  \mathbb{E}_{XY} \Big[\mathsf{H}(Y, q_{X,  \boldsymbol{\theta}})\Big] + \lambda\max_{\boldsymbol{\alpha}} \mathrm{I}(X; \hat{Y}^{\boldsymbol{\alpha}[Y]}|Y) , \label{eq:obj1}
\end{align}
where $\lambda>0$ is a hyper-parameter trading the CE loss with the maximum CMI, and the maximization over $\boldsymbol{\alpha}$ is taken over the region $0 \leq \alpha [i] \leq \beta$, $1 \leq i \leq C$. The optimization problem then becomes 
 \begin{align} 
& \min_{\boldsymbol{\theta}} \Big\{ \mathbb{E}_{XY} \Big[\mathsf{H}(Y, q_{X,  \boldsymbol{\theta}})\Big] + \lambda\max_{\boldsymbol{\alpha}} \mathrm{I}(X; \hat{Y}^{\boldsymbol{\alpha}[Y]}|Y) \Big \}  \nonumber \\
= &\min_{\boldsymbol{\theta}} \Big\{ \mathbb{E}_{XY} \Big[\mathsf{H}(Y, q_{X,  \boldsymbol{\theta}})\Big] + \lambda\max_{\boldsymbol{\alpha}} \sum_y P_{Y} [y] \mathrm{I}(X; \hat{Y}^{\alpha[y]}|Y=y) \Big \} \label{eq:unconstrained1} \\
=& \min_{\boldsymbol{\theta}} \Big\{ \mathbb{E}_{XY} \Big[\mathsf{H}(Y, q_{X,  \boldsymbol{\theta}})\Big] + \lambda \sum_y P_{Y} [y] \max_{\alpha[y]} \mathrm{I}(X; \hat{Y}^{\alpha[y]}|Y=y) \Big \} . \label{eq:unconstrained2}
%\\
%& = \min_{\boldsymbol{\theta}} \Big\{ \mathbb{E}_{XY} \Big[\mathsf{H}(Y, q_{X,  \boldsymbol{\theta}})\Big] + \lambda \sum_y P_{Y} [y] \max_{\alpha[y]} %\mathbb{E}_{X|Y} \big[ \mathrm{KL}\left( q_{X,  \boldsymbol{\theta}}^{\alpha[y] }, s_{y,\alpha[y] } \right) \left | \right. Y=y \big]  \Big \} .\label{eq:unconstrainedObj}
\end{align}

In order to get a better understanding about the behavior of the second term in the objective function of \eqref{eq:unconstrained1} w.r.t. $\boldsymbol{\alpha}$,  we depict in \Cref{fig:CmiPerClassVsFactor} $\mathrm{I}(X; \hat{Y}^{\boldsymbol{\alpha}[Y]}|Y=y)$ vs $\alpha[y]$ for three randomly-selected  classes $y$ using a pre-trained ResNet-50 on CIFAR-100.  In \Cref{fig:CmiPerClassVsFactor},  $\max_{\alpha} ~ \mathrm{I}(X; \hat{Y}^{\alpha}|Y=y)   $  is achieved at a value of $\alpha$ which is between $0.25$ and $0.75$. In Theorem~\ref{th:deriv} of  \Cref{app:proofderiv}, we further show that for each label $y$, $ \mathrm{I}(X; \hat{Y}^{\alpha}|Y=y)   $ as a function of $\alpha$ is continuously differentiable.

However, finding an algorithmic solution to the min-max problem in \eqref{eq:unconstrained1} to \eqref{eq:unconstrained2} is challenging. To overcome this difficulty, we next develop a more tractable expression for 
$ \max_{\alpha} ~ \mathrm{I}(X; \hat{Y}^{\alpha}|Y=y)  $. At this point, we invoke the following theorem, which will be proved in \Cref{app:theorem2}.
\begin{theorem} \label{th2}
For any label $y$, 
 \begin{equation}
     \max_{\alpha} ~ \mathrm{I}(X; \hat{Y}^{\alpha}|Y=y)  = \lim_{\omega \to \infty} \frac{1}{\omega} \ln \frac{1}{\beta} \int_0^{\beta} \exp{ \{\omega 
      \mathrm{I}(X; \hat{Y}^{\alpha}|Y=y) \} } d \alpha . \label{eq:th2}
 \end{equation}   
\end{theorem}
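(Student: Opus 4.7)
The plan is to recognize the right-hand side of \eqref{eq:th2} as the classical $L^\omega$-to-$L^\infty$ limit (a special case of Varadhan's / Laplace's principle): for a continuous, bounded function $g$ on a compact interval of positive length, $\frac{1}{\omega}\ln \int e^{\omega g(\alpha)}\,d\alpha \to \max g$ as $\omega \to \infty$. I will specialize this to $g(\alpha) = \mathrm{I}(X;\hat Y^{\alpha}|Y=y)$, viewed as a function of $\alpha \in [0,\beta]$, and invoke the continuous differentiability of $g$ established in Theorem~\ref{th:deriv} to supply the regularity needed. Denote $M \defeq \max_{\alpha \in [0,\beta]} g(\alpha)$, which is attained because $g$ is continuous on the compact set $[0,\beta]$.

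The argument proceeds in two matching bounds. For the upper bound, I would simply use $g(\alpha) \leq M$ pointwise, so that
\begin{align}
\frac{1}{\omega}\ln \frac{1}{\beta}\int_0^{\beta} e^{\omega g(\alpha)}\,d\alpha \;\leq\; \frac{1}{\omega}\ln \frac{1}{\beta}\int_0^{\beta} e^{\omega M}\,d\alpha \;=\; M .
\end{align}
For the lower bound, fix $\epsilon > 0$ and let $\alpha^\star \in [0,\beta]$ be a maximizer. By continuity of $g$, there exists a subinterval $J \subseteq [0,\beta]$ with $\alpha^\star \in J$ and Lebesgue measure $|J| = \delta > 0$ such that $g(\alpha) \geq M - \epsilon$ for all $\alpha \in J$. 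Then
\begin{align}
\frac{1}{\omega}\ln \frac{1}{\beta}\int_0^{\beta} e^{\omega g(\alpha)}\,d\alpha \;\geq\; \frac{1}{\omega}\ln \frac{\delta}{\beta} e^{\omega (M-\epsilon)} \;=\; M - \epsilon + \frac{1}{\omega}\ln \frac{\delta}{\beta} .
\end{align}
Letting $\omega \to \infty$ and then $\epsilon \downarrow 0$ yields $\liminf \geq M$, which together with the upper bound proves \eqref{eq:th2}.

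The main obstacle, such as it is, lies in the lower-bound step: one must ensure that there is genuinely a neighborhood of the maximizer of positive measure on which $g$ stays close to $M$. If $\alpha^\star$ lies in the interior of $[0,\beta]$, continuity alone gives an open neighborhood, hence $\delta > 0$. If $\alpha^\star$ is a boundary point ($0$ or $\beta$), continuity still supplies a one-sided neighborhood of positive length inside $[0,\beta]$. So the continuity provided by Theorem~\ref{th:deriv} suffices and the boundary case does not cause trouble. A minor nuance to mention is that at $\alpha = 0$ the power transform collapses $q_x^{\alpha}$ to the uniform distribution, so $g(0) = 0$; this is harmless and, if anything, simplifies the analysis since the supremum is attained at a strictly positive $\alpha$ whenever $M > 0$.
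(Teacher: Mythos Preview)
Your proposal is correct and matches the paper's own proof almost exactly: the paper reduces Theorem~\ref{th2} to a general lemma about continuously differentiable functions on $[0,\beta]$ (invoking Theorem~\ref{th:deriv} for the regularity of $\alpha \mapsto \mathrm{I}(X;\hat Y^{\alpha}\mid Y=y)$), and proves that lemma via the same pointwise upper bound and the same neighborhood-of-the-maximizer lower bound you give. The only cosmetic difference is that the paper fixes the interval length $\epsilon$ first and bounds below by $\min_{t \in \mathcal{N}(t^*,\epsilon)} g(t)$ before sending $\epsilon \to 0$, whereas you fix the tolerance $\epsilon$ and extract the interval via continuity; these are equivalent formulations of the standard $L^\omega \to L^\infty$ argument.
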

Therefore, when $\omega$ is large, $ \max_{\alpha} ~ \mathrm{I}(X; \hat{Y}^{\alpha}|Y=y)  $ can be approximated by 
\begin{align}
 \max_{\alpha} &~ \mathrm{I} (X; \hat{Y}^{\alpha}|Y=y) \approx \frac{1}{\omega} \ln \frac{1}{\beta} \int_0^{\beta} \exp{ \{\omega 
      \mathrm{I}(X; \hat{Y}^{\alpha}|Y=y) \} } d \alpha  \label{eq:th2-1} \\
      & \approx \frac{1}{\omega} \ln \left [ \frac{1}{N} \sum_{i=1}^N   \exp{ \{\omega 
      \mathrm{I}(X; \hat{Y}^{\alpha_i}|Y=y) \} }         \right  ], \label{eq:th2-2}
\end{align}
where $N$ is relatively large, and $\alpha_i = i\beta /N$.

Now plugging \eqref{eq:th2-2} into \eqref{eq:unconstrained2}, we have 
\begin{align}
 &\min_{\boldsymbol{\theta}} \Big\{ \mathbb{E}_{XY} \Big[\mathsf{H}(Y, q_{X,  \boldsymbol{\theta}})\Big] +  \frac{\lambda}{\omega} \sum_y P_{Y} [y] 
 \ln \left [ \frac{1}{N} \sum_{i=1}^N   \exp{ \{\omega 
      \mathrm{I}(X; \hat{Y}^{\alpha_i}|Y=y) \} }         \right  ] \Big \} .\label{eq:th2-3} 
\end{align}
Note that the second term in  the objective function of \eqref{eq:th2-3} is not amenable to parallel computation via GPU due to the dependency of KL divergence on $  s_{y, \alpha_i} $, the centroid of the power transformed cluster corresponding to $Y=y$ with power $ \alpha_i  $. To get around this difficulty, we follow  the approach in \citet{10900607} and introduce  dummy distributions $Q_{y,i} \in {\cal P} ([C])$ for each $(y, i)$
to rewrite $  \mathrm{I}(X; \hat{Y}^{\alpha_i}|Y=y) $ as follows
\begin{align}
\mathrm{I}(X; \hat{Y}^{\alpha_i}|Y=y) &= \mathbb{E}_{X|Y} \big[ \mathrm{KL}\left( q_{X,  \boldsymbol{\theta}}^{\alpha_i }, s_{y,\alpha_i } \right) \left | \right. Y=y \big] \nonumber \\
& = \min_{Q_{y, i}   } \mathbb{E}_{X|Y} \big[ \mathrm{KL}\left( q_{X,  \boldsymbol{\theta}}^{\alpha_i }, Q_{y, i} \right) \left | \right. Y=y \big],
\label{eq:th2-4}
\end{align}
where the minimum in the above is achieved when 
   \begin{align}
       Q_{y, i} = s_{y, \alpha_i} = \mathbb{E}_{X|Y}\left[ q_{X, \boldsymbol{\theta}}^{\alpha_i}|Y= y\right]. \label{eq:th2-4+}
   \end{align}
Combining \eqref{eq:th2-4} with \eqref{eq:th2-3}, we are led to solve the double minimization problem \footnote{In practice, solving the double minimization problem introduces only minor runtime overhead, as the inner optimization problem has an analytic solution and can be parallelized efficiently on modern GPUs. To demonstrate the efficiency of CMIC, we report the wall-clock training time and compare the computational overhead of CMIM and CE in \cref{sec:computation}.}
\begin{align}
&  \min_{\boldsymbol{\theta}} \Big\{ \mathbb{E}_{XY} \Big[\mathsf{H}(Y, q_{X,  \boldsymbol{\theta}})\Big] +  \frac{\lambda}{\omega} \sum_y P_{Y} [y] 
 \ln \left [ \frac{1}{N} \sum_{i=1}^N   \exp{ \{\omega \min_{Q_{y, i}   }\mathbb{E}_{X|Y} \big[ \mathrm{KL}\left( q_{X,  \boldsymbol{\theta}}^{\alpha_i }, Q_{y,i } \right) \left | \right. Y=y \big] \} }         \right  ] \Big \}  \label{eq:th2-5} \\
 & =  \min_{\boldsymbol{\theta}} \min_{\{Q_{y, i}\}_{y\in [C], i \in [N]}} \Big\{ \mathbb{E}_{XY} \Big[\mathsf{H}(Y, q_{X,  \boldsymbol{\theta}})\Big]   \nonumber \\
& \qquad \qquad + \frac{\lambda}{\omega} \sum_y P_{Y} [y] 
 \ln \left [ \frac{1}{N} \sum_{i=1}^N   \exp{ \{\omega \mathbb{E}_{X|Y} \big[ \mathrm{KL}\left( q_{X,  \boldsymbol{\theta}}^{\alpha_i }, Q_{y,i } \right) \left | \right. Y=y \big] \} }         \right  ] \Big \}  \label{eq:th2-5} 
\end{align}

When the distribution $P_{X,Y}$ is unknown, it can be approximated by its empirical distribution 
from a data sample (a training dataset or mini-batch thereof) ${\cal D} = \{ (x_i, y_i) \}_{i=1}^m$. The objective function in the double minimization \eqref{eq:th2-5} then becomes 
\begin{align}
J_{\cal D} (\boldsymbol{\theta}, \{Q_{y, i}\}_{y\in [C], i \in [N]}) & = \frac{1}{|{\cal D}|} \sum_{(x, y) \in {\cal D}} \mathsf{H}(y, q_{x,  \boldsymbol{\theta}}) + \nonumber \\
& \frac{\lambda}{\omega} \sum_y \frac{|{\cal D}_y|}{|{\cal D}|}
 \ln \left [ \frac{1}{N} \sum_{i=1}^N   \exp{ \{ \frac{\omega}{|{\cal D}_y|} \sum_{j \in {\cal D}_y} \mathrm{KL}\left( q_{X_j,  \boldsymbol{\theta}}^{\alpha_i }, Q_{y,i } \right)  \} }    \right  ]. \label{eq:joint_opt_max}
\end{align}

\subsection{Algorithm for Solving the Optimization in \eqref{eq:th2-5}} \label{subsec:alg}

The double minimization optimization problem in \eqref{eq:th2-5} naturally lends us an alternating algorithm that optimizes $\boldsymbol{\theta}$ and $\{Q_{y,i}\}_{y \in [C], i\in[N]}$ alternatively to minimize the objective function in \eqref{eq:th2-5} or \Cref{eq:joint_opt_max}, given the other is fixed.

Given $\{Q_{y,i}\}_{y \in [C], i\in[N]}$, $\boldsymbol{\theta}$  can be updated using the same first-order optimization method as in conventional deep learning, such as stochastic gradient descent applied over mini-batches.

Following \citet{10900607}, given $\boldsymbol{\theta}$, for each class $y$, $\{Q_{y,i }\}_{ i\in[N]}$ can be updated according to \eqref{eq:th2-4+} in the following manner: (1) we randomly sample a mini-batch of samples $|\mathfrak{B}_y|$ instances from the training set with ground truth label $y$; (2) $\{Q_{y,i}\}_{ i\in[N]}$ can be updated as 
\begin{align}\label{eq:empQ}
   & Q_{y,i } = \frac{\sum_{x \in \mathfrak{B}_y} q^{\alpha_i}_{x,\boldsymbol{\theta} }}{|\mathfrak{B}_y|} \quad  \forall i \in [N].
\end{align}

The proposed alternating algorithm for optimization problem \eqref{eq:th2-5} is summarized in \Cref{alg} \footnote{If the impact of the random mini-batch sampling and stochastic gradient descent is ignored, the alternating algorithm is guaranteed to converge in theory since given $\boldsymbol{\theta}  $, the optimal $\{Q_{y,i}\}_{y \in [C], i\in[N]}$ can be found analytically via \eqref{eq:empQ}, although it may not converge to a global minimum.}. To simplify our notation, we use  $(\cdot)^t_{b}$ to indicate parameters at the $b$-th batch updation during the $t$-th alternating iteration of the algorithm. We further write  $(\cdot)^t_{B}$ as $ (\cdot)^t$ whenever needed,  set $(\cdot)^t_{0}=(\cdot)^{t-1}$.

\begin{algorithm}[!t]
\caption{Conditional Mutual Information Minimized (CMIM) Method.} \label{alg}
\small
{\bfseries Input:} Training set $\mathcal{T}$, mini-batches $\{\mathcal{B}_b\}_{b \in [B]}$, number of epochs $\mathit{T}$, $\lambda$, $\beta$, $\omega$, $N$\\

{\bfseries Initialization:} Initialize  $\boldsymbol{\theta}^0$ and ${Q_{y,i}^0}_{\ y\in [C], i\in[N]}$.

\For{$t=1$ to $\mathit{T}$}
{
    \colorbox{pink}{[Sampling $\alpha_{i}$]} Randomly select N samples $\{\alpha_{i}\}_{i \in [N]}$ from interval $[0,\beta]$. \\
    \For{$b=1$ to $B$}
    {
    \colorbox{pink}{[Updating  $Q_{y,i}$]} For each class $y$, construct mini-batch $\{\mathcal{B}_y\}_{y\in[C]}$. Update $Q^{t}_{y,i}$, $\forall y \in[C];\ \forall i \in[N]$, according to \Cref{eq:empQ}.\\

    \colorbox{pink}{[Updating $\boldsymbol{\theta}$]}
    Fix ${Q_{y,i}^{t}}_{\ y\in [C], i\in[N]}$. Update $\boldsymbol{\theta}^t_{b-1}$ to $\boldsymbol{\theta}^t_{b}$ by stochastic gradient descent over the objective function \ref{eq:joint_opt_max}.   
    }
}
\textbf{Output:} Global model $\boldsymbol{\theta}^{\mathit{T}}$. 
\end{algorithm}

\begin{remark}
The compactness of output clusters alone is insufficient to ensure an undistillable DNN. Undistillability is a significantly stronger property. For instance, LS can improve the compactness of the feature space of a DNN, which may make the output probability of the DNN more compact \citep{muller2019does}. However, this compactness is not enough to make the DNN undistillable (see \cref{sec:exp})

% of output clusters to some extent \citep{muller2019does}. However, this improvement is not enough to make LS-trained models undistillable.

% According to \cref{def:Distillability}, a DNN is no longer undistillable if there exists any KD method that produces a distilled student outperforming the LS student. Our experiments in \cref{sec:exp} (\cref{Tab:BenchmarkCIFAR100}) show that for DNNs trained with LS, there is always at least one KD method that renders them distillable. In other words, LS is not strong enough to produce undistillable DNNs—LS-trained models can still be leveraged as teachers by competing methods to produce superior distilled students.

% To achieve an undistillable DNN, one must counteract \textit{all} possible KD methods and examine the compactness of each cluster under \textit{all} potential cluster-based power transforms. In particular, an undistillable DNN is analogous to an unbreakable encryption algorithm. Just as an unbreakable encryption scheme must withstand all possible attack methods, an undistillable DNN must resist distillation from all conceivable KD techniques.    
\end{remark}
\section{Experiments} \label{sec:exp}
\begin{table}[!ht]
% \vspace{-0.4in}
\caption{Top-1 accuracy (\%) of the knockoff student on CIFAR-100, TinyImageNet and ImageNet dataset (the results for CIFAR-100 and TinyImageNet are averaged over 3 runs). Green upward arrows ($\color{green}{\uparrow}$) and red downward arrows ($\color{red}{\downarrow}$) indicate whether the knockoff student was able to render the underlying DNN distillable.}
% \vspace{-0.15in}
\label{Tab:BenchmarkCIFAR100}
\begin{center}
\resizebox{1\textwidth}{!}{
\begin{tabular}{|cccccccccccc|}
\hline
\rowcolor{mygray} \multicolumn{12}{|c|}{CIFAR-100} \\ \toprule\hline
% \multicolumn{11}{|c|}{Post-Process Defence} \\ \hline
\multicolumn{1}{|c|}{Defense}                  & \multicolumn{1}{c|}{Model}                  & \multicolumn{1}{c|}{K-student} & LS & KD    & MKD   & DKD   & DIST  & HTC   & AVG   & \multicolumn{1}{c|}{Knockoff} & \multicolumn{1}{l|}{\cellcolor{pink}Best} \\ \hline
\multicolumn{1}{|c|}{\multirow{4}{*}{MAD}}    & \multicolumn{1}{c|}{\multirow{2}{*}{VGG16}} & \multicolumn{1}{c|}{VGG11}   & 71.94 & 68.55 $\color{red}{\downarrow}$ & 72.08 $\color{green}{\uparrow}$ & 53.32 $\color{red}{\downarrow}$ & 69.21 $\color{red}{\downarrow}$ & 71.19 $\color{red}{\downarrow}$ & 70.03 $\color{red}{\downarrow}$ & \multicolumn{1}{c|}{61.44 $\color{red}{\downarrow}$} & 72.08 $\color{green}{\uparrow}$ \\ \cline{3-3}
\multicolumn{1}{|c|}{}                        & \multicolumn{1}{c|}{}                       & \multicolumn{1}{c|}{SNV2}    & 72.65 & 72.50 $\color{red}{\downarrow}$  & 72.46 $\color{red}{\downarrow}$ & 7.64 \ \ $\color{red}{\downarrow}$ & 69.91 $\color{red}{\downarrow}$ & 71.37 $\color{red}{\downarrow}$ & 72.86 $\color{green}{\uparrow}$ & \multicolumn{1}{c|}{70.87 $\color{red}{\downarrow}$} & 72.86 $\color{green}{\uparrow}$ \\ \cline{2-11} 
\multicolumn{1}{|c|}{}                        & \multicolumn{1}{c|}{\multirow{2}{*}{RN50}}  & \multicolumn{1}{c|}{VGG11}   & 71.94 & 72.00 $\color{green}{\uparrow}$ & 72.04 $\color{green}{\uparrow}$ & 54.29 $\color{red}{\downarrow}$ & 71.57 $\color{red}{\downarrow}$ & 70.76 $\color{red}{\downarrow}$ & 70.73 $\color{red}{\downarrow}$ & \multicolumn{1}{c|}{61.73 $\color{red}{\downarrow}$ }  & 72.04 $\color{green}{\uparrow}$ \\ \cline{3-3}
\multicolumn{1}{|c|}{}                        & \multicolumn{1}{c|}{}                       & \multicolumn{1}{c|}{RN18}    & 78.76 & 77.76 $\color{red}{\downarrow}$ & 78.79 $\color{green}{\uparrow}$ & 43.73 $\color{red}{\downarrow}$ & 73.76 $\color{red}{\downarrow}$ & 77.89 $\color{red}{\downarrow}$ & 78.61 $\color{red}{\downarrow}$ & \multicolumn{1}{c|}{73.92 $\color{red}{\downarrow}$ } & 78.79 $\color{green}{\uparrow}$ \\ \hline
\multicolumn{1}{|c|}{\multirow{4}{*}{APGP}}   & \multicolumn{1}{c|}{\multirow{2}{*}{VGG16}} & \multicolumn{1}{c|}{VGG11}   & 71.94 & 71.92 $\color{green}{\uparrow}$ & 72.27 $\color{green}{\uparrow}$ & 27.24 $\color{red}{\downarrow}$ & 69.25 $\color{red}{\downarrow}$ & 70.08 $\color{red}{\downarrow}$ & 72.01 $\color{red}{\downarrow}$ & \multicolumn{1}{c|}{45.98 $\color{red}{\downarrow}$ } & 72.27 $\color{green}{\uparrow}$ \\ \cline{3-3}
\multicolumn{1}{|c|}{}                        & \multicolumn{1}{c|}{}                       & \multicolumn{1}{c|}{SNV2}    & 72.65 & 73.10  $\color{green}{\uparrow}$ & 73.75  $\color{green}{\uparrow}$ & 12.52 $\color{red}{\downarrow}$ & 71.04 $\color{red}{\downarrow}$ & 71.66 $\color{red}{\downarrow}$ & 73.20 $\color{green}{\uparrow}$ & \multicolumn{1}{c|}{9.48 \ \ $\color{red}{\downarrow}$ }  & 73.75 $\color{green}{\uparrow}$ \\ \cline{2-11} 
\multicolumn{1}{|c|}{}                        & \multicolumn{1}{c|}{\multirow{2}{*}{RN50}}  & \multicolumn{1}{c|}{VGG11}   & 71.94 & 71.91 $\color{red}{\downarrow}$ & 72.11 $\color{green}{\uparrow}$ & 9.74 \ \ $\color{red}{\downarrow}$  & 69.48 $\color{red}{\downarrow}$ & 71.36 $\color{red}{\downarrow}$ & 71.92 $\color{green}{\uparrow}$ & \multicolumn{1}{c|}{34.71 $\color{red}{\downarrow}$} & 72.11 $\color{green}{\uparrow}$ \\ \cline{3-3}
\multicolumn{1}{|c|}{}                        & \multicolumn{1}{c|}{}                       & \multicolumn{1}{c|}{RN18}    & 78.76 & 78.04 $\color{red}{\downarrow}$ & 79.06 $\color{green}{\uparrow}$ & 62.71 $\color{red}{\downarrow}$ & 77.32 $\color{red}{\downarrow}$ & 77.82 $\color{red}{\downarrow}$ & 77.90 $\color{red}{\downarrow}$ & \multicolumn{1}{c|}{2.57 \ \ $\color{red}{\downarrow}$}  & 79.06 $\color{green}{\uparrow}$ \\ \hline

\multicolumn{1}{|c|}{\multirow{4}{*}{RSP}}    & \multicolumn{1}{c|}{\multirow{2}{*}{VGG16}} & \multicolumn{1}{c|}{VGG11}   & 71.94 & 71.42 $\color{red}{\downarrow}$ & 72.04 $\color{green}{\uparrow}$ & 70.22 $\color{red}{\downarrow}$ & 70.80 $\color{red}{\downarrow}$ & 70.40 $\color{red}{\downarrow}$ & 71.56 $\color{red}{\downarrow}$ & \multicolumn{1}{c|}{31.04 $\color{red}{\downarrow}$} & 72.04 $\color{green}{\uparrow}$ \\ \cline{3-3}
\multicolumn{1}{|c|}{}                        & \multicolumn{1}{c|}{}                       & \multicolumn{1}{c|}{SNV2}    & 72.65 & 73.55 $\color{green}{\uparrow}$ & 72.95 $\color{green}{\uparrow}$ & 67.45 $\color{red}{\downarrow}$ & 72.19 $\color{red}{\downarrow}$ & 71.46 $\color{red}{\downarrow}$ & 72.27 $\color{red}{\downarrow}$ & \multicolumn{1}{c|}{26.09 $\color{red}{\downarrow}$} & 73.55 $\color{green}{\uparrow}$ \\ \cline{2-11} 
\multicolumn{1}{|c|}{}                        & \multicolumn{1}{c|}{\multirow{2}{*}{RN50}}  & \multicolumn{1}{c|}{VGG11}   & 71.94 & 71.97 $\color{green}{\uparrow}$ & 72.01 $\color{green}{\uparrow}$ & 69.53 $\color{red}{\downarrow}$ & 72.18 $\color{green}{\uparrow}$ & 70.87 $\color{red}{\downarrow}$ & 70.85 $\color{red}{\downarrow}$ & \multicolumn{1}{c|}{46.68 $\color{red}{\downarrow}$ } & 72.18 $\color{green}{\uparrow}$ \\ \cline{3-3}
\multicolumn{1}{|c|}{}                        & \multicolumn{1}{c|}{}                       & \multicolumn{1}{c|}{RN18}    & 78.76 & 77.78 $\color{red}{\downarrow}$ & 77.79 $\color{red}{\downarrow}$ & 77.01 $\color{red}{\downarrow}$ & 78.88 $\color{green}{\uparrow}$ & 78.00 $\color{red}{\downarrow}$ & 78.13 $\color{red}{\downarrow}$ & \multicolumn{1}{c|}{55.86 $\color{red}{\downarrow}$} & 78.88 $\color{green}{\uparrow}$ \\ \hline
% \multicolumn{11}{|c|}{Model Stealing Resistant Training} \\ \hline

\multicolumn{1}{|c|}{\multirow{4}{*}{NT}}     & \multicolumn{1}{c|}{\multirow{2}{*}{VGG16}} & \multicolumn{1}{c|}{VGG11}   & 71.94 & 71.40 $\color{red}{\downarrow}$ & 73.44 $\color{green}{\uparrow}$ & 71.47 $\color{red}{\downarrow}$ & 71.33 $\color{red}{\downarrow}$ & 70.77 $\color{red}{\downarrow}$ & 71.58 $\color{red}{\downarrow}$ & \multicolumn{1}{c|}{63.56 $\color{red}{\downarrow}$} & 73.44 $\color{green}{\uparrow}$ \\ \cline{3-3}
\multicolumn{1}{|c|}{}                        & \multicolumn{1}{c|}{}                       & \multicolumn{1}{c|}{SNV2}    & 72.65 & 72.44 $\color{red}{\downarrow}$ & 72.70 $\color{green}{\uparrow}$ & 6.24 \ \ $\color{red}{\downarrow}$ & 72.04 $\color{red}{\downarrow}$ & 70.75 $\color{red}{\downarrow}$ & 72.83 $\color{green}{\uparrow}$ & \multicolumn{1}{c|}{6.32 \ \ $\color{red}{\downarrow}$}  & 72.83 $\color{green}{\uparrow}$ \\ \cline{2-11} 
\multicolumn{1}{|c|}{}                        & \multicolumn{1}{c|}{\multirow{2}{*}{RN50}}  & \multicolumn{1}{c|}{VGG11}   & 71.94 & 72.01 $\color{green}{\uparrow}$ & 72.03 $\color{green}{\uparrow}$ & 71.55 $\color{red}{\downarrow}$ & 71.88 $\color{red}{\downarrow}$ & 70.16 $\color{red}{\downarrow}$ & 71.94 $\color{red}{\downarrow}$ & \multicolumn{1}{c|}{62.94 $\color{red}{\downarrow}$} & 72.03 $\color{green}{\uparrow}$ \\ \cline{3-3}
\multicolumn{1}{|c|}{}                        & \multicolumn{1}{c|}{}                       & \multicolumn{1}{c|}{RN18}    & 78.76 & 78.41 $\color{red}{\downarrow}$ & 78.92 $\color{green}{\uparrow}$ & 79.26 $\color{green}{\uparrow}$ & 78.99 $\color{green}{\uparrow}$ & 77.94 $\color{red}{\downarrow}$ & 78.33 $\color{red}{\downarrow}$ & \multicolumn{1}{c|}{68.96 $\color{red}{\downarrow}$ } & 79.26 $\color{green}{\uparrow}$ \\ \hline

\multicolumn{1}{|c|}{\multirow{4}{*}{SNT}}    & \multicolumn{1}{c|}{\multirow{2}{*}{VGG16}} & \multicolumn{1}{c|}{VGG11}   & 71.94 & 72.06 $\color{green}{\uparrow}$ & 72.28 $\color{green}{\uparrow}$ & 4.92 \ \  $\color{red}{\downarrow}$  & 71.98 $\color{green}{\uparrow}$ & 70.60 $\color{red}{\downarrow}$ & 71.63 $\color{red}{\downarrow}$ & \multicolumn{1}{c|}{64.08 $\color{red}{\downarrow}$} & 72.06 $\color{green}{\uparrow}$ \\ \cline{3-3}
\multicolumn{1}{|c|}{}                        & \multicolumn{1}{c|}{}                       & \multicolumn{1}{c|}{SNV2}    & 72.65 & 72.94 $\color{green}{\uparrow}$ & 73.17 $\color{green}{\uparrow}$ & 72.78 $\color{green}{\uparrow}$ & 72.22 $\color{red}{\downarrow}$ & 71.22 $\color{red}{\downarrow}$ & 72.74 $\color{green}{\uparrow}$ & \multicolumn{1}{c|}{6.22 \ \  $\color{red}{\downarrow}$}  & 73.17 $\color{green}{\uparrow}$ \\ \cline{2-11} 
\multicolumn{1}{|c|}{}                        & \multicolumn{1}{c|}{\multirow{2}{*}{RN50}}  & \multicolumn{1}{c|}{VGG11}   & 71.94 & 72.02 $\color{green}{\uparrow}$ & 72.12 $\color{green}{\uparrow}$ & 72.32 $\color{green}{\uparrow}$ & 71.70 $\color{red}{\downarrow}$ & 70.66 $\color{red}{\downarrow}$ & 71.65 $\color{red}{\downarrow}$ & \multicolumn{1}{c|}{62.94 $\color{red}{\downarrow}$ } & 72.32 $\color{green}{\uparrow}$ \\ \cline{3-3}
\multicolumn{1}{|c|}{}                        & \multicolumn{1}{c|}{}                       & \multicolumn{1}{c|}{RN18}    & 78.76 & 78.25 $\color{red}{\downarrow}$ & 78.48 $\color{red}{\downarrow}$ & 78.82 $\color{green}{\uparrow}$ & 78.14 $\color{red}{\downarrow}$ & 78.45 $\color{red}{\downarrow}$ & 78.38 $\color{red}{\downarrow}$ & \multicolumn{1}{c|}{67.71 $\color{red}{\downarrow}$ } & 78.82  $\color{green}{\uparrow}$\\ \hline

\multicolumn{1}{|c|}{\multirow{4}{*}{ST}}    & \multicolumn{1}{c|}{\multirow{2}{*}{VGG16}} & \multicolumn{1}{c|}{VGG11}   & 71.94 & 72.09 $\color{green}{\uparrow}$ & 72.01 $\color{green}{\uparrow}$ & 71.63  $\color{red}{\downarrow}$  & 71.93 $\color{red}{\downarrow}$ & 71.16 $\color{red}{\downarrow}$ & 71.63 $\color{red}{\downarrow}$ & \multicolumn{1}{c|}{63.32 $\color{red}{\downarrow}$} & 72.09 $\color{green}{\uparrow}$ \\ \cline{3-3}
\multicolumn{1}{|c|}{}                        & \multicolumn{1}{c|}{}                       & \multicolumn{1}{c|}{SNV2}    & 72.65 & 72.64 $\color{red}{\downarrow}$ & 72.67 $\color{green}{\uparrow}$ & 70.53 $\color{red}{\downarrow}$ & 72.24 $\color{red}{\downarrow}$ & 71.32 $\color{red}{\downarrow}$ & 72.42 $\color{red}{\downarrow}$ & \multicolumn{1}{c|}{69.46 $\color{red}{\downarrow}$}  & 72.67 $\color{green}{\uparrow}$ \\ \cline{2-11} 
\multicolumn{1}{|c|}{}                        & \multicolumn{1}{c|}{\multirow{2}{*}{RN50}}  & \multicolumn{1}{c|}{VGG11}   & 71.94 & 72.00 $\color{green}{\uparrow}$ & 72.13 $\color{green}{\uparrow}$ & 71.62 $\color{red}{\downarrow}$ & 71.76 $\color{red}{\downarrow}$ & 70.54 $\color{red}{\downarrow}$ & 71.73 $\color{red}{\downarrow}$ & \multicolumn{1}{c|}{65.43 $\color{red}{\downarrow}$ } & 72.13 $\color{green}{\uparrow}$ \\ \cline{3-3}
\multicolumn{1}{|c|}{}                        & \multicolumn{1}{c|}{}                       & \multicolumn{1}{c|}{RN18}    & 78.76 & 78.96 $\color{green}{\uparrow}$ & 79.02 $\color{green}{\uparrow}$ & 78.35 $\color{red}{\downarrow}$ & 78.31 $\color{red}{\downarrow}$ & 78.36 $\color{red}{\downarrow}$ & 78.81 $\color{green}{\uparrow}$ & \multicolumn{1}{c|}{72.87 $\color{red}{\downarrow}$ } & 79.02  $\color{green}{\uparrow}$\\ \hline

\multicolumn{1}{|c|}{\multirow{4}{*}{LS}}     & \multicolumn{1}{c|}{\multirow{2}{*}{VGG16}} & \multicolumn{1}{c|}{VGG11}   & 71.94 & 71.90 $\color{red}{\downarrow}$ & 72.00 $\color{green}{\uparrow}$ & 71.57 $\color{red}{\downarrow}$ & 70.89 $\color{red}{\downarrow}$ & 70.66 $\color{red}{\downarrow}$ & 71.76 $\color{red}{\downarrow}$ & \multicolumn{1}{c|}{63.49 $\color{red}{\downarrow}$ } & 72.00 $\color{green}{\uparrow}$ \\ \cline{3-3}
\multicolumn{1}{|c|}{}                        & \multicolumn{1}{c|}{}                       & \multicolumn{1}{c|}{SNV2}    & 72.65 & 72.87 $\color{green}{\uparrow}$ & 73.52 $\color{green}{\uparrow}$ & 70.01 $\color{red}{\downarrow}$ & 71.49 $\color{red}{\downarrow}$ & 71.70 $\color{red}{\downarrow}$ & 73.01 $\color{green}{\uparrow}$ & \multicolumn{1}{c|}{65.20 $\color{red}{\downarrow}$} & 73.52 $\color{green}{\uparrow}$ \\ \cline{2-11} 
\multicolumn{1}{|c|}{}                        & \multicolumn{1}{c|}{\multirow{2}{*}{RN50}}  & \multicolumn{1}{c|}{VGG11}   & 71.94 & 71.82 $\color{red}{\downarrow}$ & 71.99 $\color{green}{\uparrow}$ & 71.95 $\color{red}{\downarrow}$ & 70.77 $\color{red}{\downarrow}$ & 70.86 $\color{red}{\downarrow}$ & 71.88 $\color{red}{\downarrow}$ & \multicolumn{1}{c|}{62.29 $\color{red}{\downarrow}$ } & 71.99 $\color{green}{\uparrow}$ \\ \cline{3-3}
\multicolumn{1}{|c|}{}                        & \multicolumn{1}{c|}{}                       & \multicolumn{1}{c|}{RN18}    & 78.76 & 77.72 $\color{red}{\downarrow}$ & 77.82 $\color{red}{\downarrow}$ & 79.37 $\color{green}{\uparrow}$ & 78.33 $\color{red}{\downarrow}$ & 78.31 $\color{red}{\downarrow}$ & 77.91 $\color{red}{\downarrow}$ & \multicolumn{1}{c|}{63.36 $\color{red}{\downarrow}$ } & 79.37 $\color{green}{\uparrow}$ \\ \hline

\multicolumn{1}{|c|}{\multirow{4}{*}{CMIM}} & \multicolumn{1}{c|}{\multirow{2}{*}{VGG16}} & \multicolumn{1}{c|}{VGG11}   & 71.94 & 71.87 $\color{red}{\downarrow}$ & 71.64 $\color{red}{\downarrow}$ & 71.56 $\color{red}{\downarrow}$ & 70.34 $\color{red}{\downarrow}$ & 71.71 $\color{red}{\downarrow}$ & 71.42 $\color{red}{\downarrow}$ & \multicolumn{1}{c|}{66.89 $\color{red}{\downarrow}$ } & 71.87 $\color{red}{\downarrow}$ \\ \cline{3-3}
\multicolumn{1}{|c|}{}                        & \multicolumn{1}{c|}{}                       & \multicolumn{1}{c|}{SNV2}    & 72.65 & 72.53 $\color{red}{\downarrow}$ & 71.44 $\color{red}{\downarrow}$ & 72.46 $\color{red}{\downarrow}$ & 71.45 $\color{red}{\downarrow}$ & 71.59 $\color{red}{\downarrow}$ & 71.94 $\color{red}{\downarrow}$ & \multicolumn{1}{c|}{64.45 $\color{red}{\downarrow}$} &  72.53 $\color{red}{\downarrow}$ \\ \cline{2-11} 
\multicolumn{1}{|c|}{}                        & \multicolumn{1}{c|}{\multirow{2}{*}{RN50}}  & \multicolumn{1}{c|}{VGG11}   & 71.94 & 71.54 $\color{red}{\downarrow}$ & 71.34 $\color{red}{\downarrow}$ & 71.77 $\color{red}{\downarrow}$ & 71.86 $\color{red}{\downarrow}$ & 69.32 $\color{red}{\downarrow}$ & 71.70 $\color{red}{\downarrow}$ & \multicolumn{1}{c|}{60.58 $\color{red}{\downarrow}$} & 71.86 $\color{red}{\downarrow}$ \\ \cline{3-3}
\multicolumn{1}{|c|}{}                        & \multicolumn{1}{c|}{}                       & \multicolumn{1}{c|}{RN18}    & 78.76 & 78.21 $\color{red}{\downarrow}$ & 78.16 $\color{red}{\downarrow}$ & 78.13 $\color{red}{\downarrow}$ & 77.56 $\color{red}{\downarrow}$ & 77.23 $\color{red}{\downarrow}$ & 78.64 $\color{red}{\downarrow}$ & \multicolumn{1}{c|}{65.88 $\color{red}{\downarrow}$} & 78.64 $\color{red}{\downarrow}$ \\  \toprule\hline

\rowcolor{mygray} \multicolumn{12}{|c|}{TinyImageNet} \\ \toprule\hline 

\multicolumn{1}{|c}{\multirow{2}{*}{RSP}} & \multicolumn{1}{|c|}{RN34} & \multicolumn{1}{|c|}{RN18}   & 63.56 & 63.54 $\color{red}{\downarrow}$ & 64.32 $\color{green}{\uparrow}$ & 64.01 $\color{green}{\uparrow}$ & 63.27 $\color{red}{\downarrow}$ & 63.54 $\color{red}{\downarrow}$ & 62.15 $\color{red}{\downarrow}$ & \multicolumn{1}{c|}{55.43 $\color{red}{\downarrow}$ } & 64.32 $\color{green}{\uparrow}$ \\ \cline{2-11}
 & \multicolumn{1}{|c|}{RN50}  & \multicolumn{1}{|c|}{SNV2}   & 60.61 & 60.18 $\color{red}{\downarrow}$ & 60.76  $\color{green}{\uparrow}$  & 56.26 $\color{red}{\downarrow}$ & 56.43 $\color{red}{\downarrow}$ & 60.96  $\color{green}{\uparrow}$  & 60.15 $\color{red}{\downarrow}$ & \multicolumn{1}{c|}{54.01 $\color{red}{\downarrow}$} & 60.96  $\color{green}{\uparrow}$  \\ \cline{2-11} \hline

 \multicolumn{1}{|c}{\multirow{2}{*}{ST}} & \multicolumn{1}{|c|}{RN34} & \multicolumn{1}{|c|}{RN18}   & 63.56 & 63.96  $\color{green}{\uparrow}$ & 64.12  $\color{green}{\uparrow}$  & 63.25 $\color{red}{\downarrow}$ & 63.51 $\color{red}{\downarrow}$ & 63.49 $\color{red}{\downarrow}$ & 63.84  $\color{green}{\uparrow}$  & \multicolumn{1}{c|}{57.42 $\color{red}{\downarrow}$ } & 64.12  $\color{green}{\uparrow}$  \\ \cline{2-11}
 & \multicolumn{1}{|c|}{RN50}  & \multicolumn{1}{|c|}{SNV2}   & 60.61 & 61.23  $\color{green}{\uparrow}$ & 61.36  $\color{green}{\uparrow}$  & 60.43 $\color{red}{\downarrow}$ & 60.32 $\color{red}{\downarrow}$ & 60.22 $\color{red}{\downarrow}$ & 61.13  $\color{green}{\uparrow}$  & \multicolumn{1}{c|}{55.84 $\color{red}{\downarrow}$} & 61.36  $\color{green}{\uparrow}$ \\ \cline{2-11} \hline

\multicolumn{1}{|c}{\multirow{2}{*}{NT}} & \multicolumn{1}{|c|}{RN34} & \multicolumn{1}{|c|}{RN18}   & 63.56 & 63.27 $\color{red}{\downarrow}$ & 64.49 $\color{green}{\uparrow}$  & 64.67  $\color{green}{\uparrow}$  & 63.43 $\color{red}{\downarrow}$ & 63.50 $\color{red}{\downarrow}$ & 64.43  $\color{green}{\uparrow}$ & \multicolumn{1}{c|}{53.11 $\color{red}{\downarrow}$ } & 64.67 $\color{green}{\uparrow}$ \\ \cline{2-11}
 & \multicolumn{1}{|c|}{RN50}  & \multicolumn{1}{|c|}{SNV2}   & 60.61 & 59.57 $\color{red}{\downarrow}$ & 61.55  $\color{green}{\uparrow}$ & 31.55 $\color{red}{\downarrow}$ & 60.03 $\color{red}{\downarrow}$ & 60.98  $\color{green}{\uparrow}$ & 60.31 $\color{red}{\downarrow}$ & \multicolumn{1}{c|}{50.94 $\color{red}{\downarrow}$} & 61.55  $\color{green}{\uparrow}$  \\ \cline{2-11} \hline

\multicolumn{1}{|c}{\multirow{2}{*}{LS}} & \multicolumn{1}{|c|}{RN34} & \multicolumn{1}{|c|}{RN18}   & 63.56 & 63.74  $\color{green}{\uparrow}$  & 64.01  $\color{green}{\uparrow}$  & 64.23  $\color{green}{\uparrow}$ & 63.51 $\color{red}{\downarrow}$ & 64.20  $\color{green}{\uparrow}$  & 63.04$ \color{red}{\downarrow}$ & \multicolumn{1}{c|}{57.43 $\color{red}{\downarrow}$ } & 64.23 $\color{green}{\uparrow}$ \\ \cline{2-11}

 & \multicolumn{1}{|c|}{RN50}  & \multicolumn{1}{|c|}{SNV2}   & 60.61 & 60.32 $\color{red}{\downarrow}$ & 60.93  $\color{green}{\uparrow}$  & 60.74  $\color{green}{\uparrow}$ & 60.11 $\color{red}{\downarrow}$ & 60.46 $\color{red}{\downarrow}$ & 60.14 $\color{red}{\downarrow}$ & \multicolumn{1}{c|}{52.96 $\color{red}{\downarrow}$} & 60.93  $\color{green}{\uparrow}$ \\ \cline{2-11} \hline

 \multicolumn{1}{|c}{\multirow{2}{*}{CMIM}} & \multicolumn{1}{|c|}{RN34} & \multicolumn{1}{|c|}{RN18}   & 63.53 & 62.89 $\color{red}{\downarrow}$ & 63.15 $\color{red}{\downarrow}$ & 62.94 $\color{red}{\downarrow}$ & 63.28 $\color{red}{\downarrow}$ & 61.57 $\color{red}{\downarrow}$ & 62.96 $\color{red}{\downarrow}$ & \multicolumn{1}{c|}{56.13 $\color{red}{\downarrow}$ } & 63.28 $\color{red}{\downarrow}$ \\ \cline{2-11}
 & \multicolumn{1}{|c|}{RN50}  & \multicolumn{1}{|c|}{SNV2}   & 60.61 & 57.57 $\color{red}{\downarrow}$ & 59.32 $\color{red}{\downarrow}$ & 60.58 $\color{red}{\downarrow}$ & 59.41 $\color{red}{\downarrow}$ & 59.33 $\color{red}{\downarrow}$ & 60.42 $\color{red}{\downarrow}$ & \multicolumn{1}{c|}{56.91 $\color{red}{\downarrow}$} & 60.58 $\color{red}{\downarrow}$ \\ \cline{2-11} \toprule\hline
\rowcolor{mygray}\multicolumn{12}{|c|}{ImageNet} \\ \toprule \hline 
\multicolumn{1}{|c|}{\multirow{2}{*}{ST}}   & \multicolumn{1}{c|}{\multirow{2}{*}{RN34}} & \multicolumn{1}{c|}{RN18}      & 70.89 & 70.74 $\color{red}{\downarrow}$ & 71.02 $\color{green}{\uparrow}$ & 70.02 $\color{red}{\downarrow}$& 69.94 $\color{red}{\downarrow}$ & 70.91 $\color{green}{\uparrow}$ & 71.00 $\color{green}{\uparrow}$ & \multicolumn{1}{c|}{63.24 $\color{red}{\downarrow}$ }   & 71.02 $\color{green}{\uparrow}$ \\ \cline{3-3}
\multicolumn{1}{|c|}{}                      & \multicolumn{1}{c|}{}                      & \multicolumn{1}{c|}{MNV2}      & 70.93 & 71.03 $\color{green}{\uparrow}$ & 71.25 $\color{green}{\uparrow}$ & 69.32 $\color{red}{\downarrow}$ & 70.53 $\color{red}{\downarrow}$ & 70.69 $\color{red}{\downarrow}$ & 71.06 $\color{green}{\uparrow}$ & \multicolumn{1}{c|}{54.53 $\color{red}{\downarrow}$}  & 71.25 $\color{green}{\uparrow}$ \\ \hline
\multicolumn{1}{|c|}{\multirow{2}{*}{CMIM}} & \multicolumn{1}{c|}{\multirow{2}{*}{RN34}} & \multicolumn{1}{c|}{RN18}      & 70.89 & 70.44 $\color{red}{\downarrow}$ & 70.69 $\color{red}{\downarrow}$ & 69.97 $\color{red}{\downarrow}$ & 70.59 $\color{red}{\downarrow}$ & 70.63 $\color{red}{\downarrow}$ & 70.53 $\color{red}{\downarrow}$ & \multicolumn{1}{c|}{59.34 $\color{red}{\downarrow}$ }   & 70.69  $\color{red}{\downarrow}$ \\ \cline{3-3}
\multicolumn{1}{|c|}{}                      & \multicolumn{1}{c|}{}                      & \multicolumn{1}{c|}{MNV2}      & 70.93 & 70.21 $\color{red}{\downarrow}$ & 70.72 $\color{red}{\downarrow}$ & 69.97 $\color{red}{\downarrow}$ & 70.44 $\color{red}{\downarrow}$ & 70.86 $\color{red}{\downarrow}$ & 70.20 $\color{red}{\downarrow}$ & \multicolumn{1}{c|}{55.24 $\color{red}{\downarrow}$}  & 70.86 $\color{red}{\downarrow}$ \\ \hline
\end{tabular}}
% \vskip -0.6in
\end{center}
\end{table}

In this section, we demonstrate the effectiveness of CMIM by comparing it with several state-of-the-art alternatives. Specifically, we first report the accuracy that a knockoff student can achieve by deploying different logit-based KD (attack) methods in \cref{exp:acck}. In all the experiments, when testing the distillibality of the trained DNNs using the benchmark defense methods and CMIM, we compare the knockoff student's accuracy  (i) when it attempts to steal the IP of protected DNN using logit-based (attack) methods with (ii) when it trains its model using the LS. If the former outperforms the latter, we conclude that the knockoff makes the underlying DNN distillable. Next, in \Cref{exp:ACC}, we report the classification accuracy of the \textit{protected} models trained by the different defense methods. Lastly, in \Cref{sec:visual}, we visualize the output cluster of models trained by CMIM, CE and NT.

\subsection{Knockoff Student Accuracy}\label{exp:acck}

\noindent $\bullet$ \textbf{Datasets:} We conduct extensive experiments on three image classification dataset, namely CIFAR-100 \citep{CIFAR100} TinyImageNet \citep{tinyiamgenet} and ImageNet \citep{imagenet}. For description of each dataset, please refer to \cref{sec:dataset}. 

\noindent $\bullet$ \textbf{Models:} To show the effectiveness of CMIM, we use different model architectural families for teacher and knockoff student models. 
To this end, we pick models from VGG family \citep{VGG}, ResNet family \citep{ResNet} (shortened as RN), ShuffleNetV2 \citep{ShuffleNetV2}, shortened as SNV2, and Mobilenetv2 \citep{sandler2018mobilenetv2} shortened as MNV2. Particularly, we have conducted experiments on the following (teacher-student) pairs for each dataset: (i) for CIFAR-100, we use four pairs $\{$(VGG16-VGG11), (VGG16-SNV2), (RN50-VGG11), (RN50-RN18)$\}$; (ii) for TinyImageNet, we use two pairs $\{$(RN34-RN18), (RN50-SNV2)$\}$; and for ImageNet we use two pairs $\{$(RN34-RN18),(RN34-MNV2)$\}$. 
 
\noindent $\bullet$ \textbf{Defense benchmark methods:}
For comprehensive comparisons, we benchmark CMIM with seven recently published defense methods: MAD \citep{MAD}, APGP \citep{APGP}, RSP \citep{RSP}, ST \citep{StingyTeacher}, NT \citep{nasty}, SNT \citep{SNT}, and LS\footnote{Although LS is not a defense method per se, it is observed that the models trained by LS reduce the knockoff student's accuracy. We discuss the rationale behind this in \Cref{Appendix:LsDefense}.} \citep{muller2019does}. 

\noindent $\bullet$ \textbf{Logit-based KD (attack) methods:} We use three logit-based KD methods that are primarily designed for when the teacher-student models are in cooperating mode, namely KD \citep{hinton2015distilling}, DKD \citep{DKD}, DIST \citep{DIST}; and four logit-based KD attacks methods that a knockoff student can deploy to make the protected DNNs possibly distillable, namely MKD \citep{yang2024markov}, HTC \citep{HTC}, AVG \citep{keser2023averager}, Knockoff \citep{orekondy2019knockoff}. We report all the training setups, including all the hyper-parameters used for both defense and attack methods in \Cref{Appendix:DefenseSetup}.

\noindent $\bullet$ \textbf{Results:} The accuracy that a knockoff student can achieve using the various \textit{(defense-attack)} combinations is summarized in \Cref{Tab:BenchmarkCIFAR100}. For accuracy variances, please refer to \Cref{app:variance}. In the table, we use the notation ``K-student'' to denote a knockoff student. The numbers in the column titled ``Best'' represent the highest accuracy obtained for each respective row, indicating the best possible performance a knockoff student can achieve using any of the listed distillation methods. 

As observed in \Cref{Tab:BenchmarkCIFAR100}, regardless of whether the teacher-student architectures are the same or different, DNNs trained with CMIM remain undistillable across all distillation methods. This is in stark contrast to DNNs trained using other defense techniques, which can still be successfully distilled to a certain degree. The results indicate that prior defense strategies do not offer complete resistance against knockoff students, whereas CMIM effectively prevents distillation, making it significantly more robust in protecting model knowledge.

We also perform an ablation study on the hyperparameters of CMIM—namely $\beta$, $N$, and $\omega$—as detailed in \cref{sec:abl}. The key findings are as follows:
\begin{itemize}
\item \textbf{Effect of $\beta$:} The accuracy of the knockoff student drops sharply when $\beta \geq 1$, suggesting that large values may destabilize optimization or excessively penalize the CMI term. This highlights the importance of carefully tuning $\beta$ to effectively balance the cross-entropy and CMI objectives.

\item \textbf{Effect of the number of samples \(N\):} Increasing the number of power samples \(N\) leads to a monotonic decrease in the knockoff student’s accuracy. This indicates that moderate values of \(N\) are sufficient to capture the necessary diversity for robust CMI estimation.

\item \textbf{Effect of the power coefficient \(\omega\):} Setting \(\omega > 30\) can lead to numerical instability, resulting in NaNs due to excessive exponentiation. Interestingly, the worst knockoff student performance is observed at \(\omega = 20\) and \(\omega = 30\), suggesting that these settings best approximate the extreme concentration behavior of \(\omega \to \infty\), thereby enhancing undistillability.
\end{itemize}

Additionally, we investigate CMIC’s impact on model calibration in \cref{Appendix:Caliberation}.

\subsection{Accuracy of Protected Models}   \label{exp:ACC}
In this section, we report the top-1 accuracy of the \textit{protected} models in \Cref{Tab:BenchmarkCIFAR100} trained using the benchmark defense methods with those trained by CMIM. The results are summarized in \Cref{tab:acc,tab:imagenetacc}. As observed, the models trained by CMIM have the highest classification accuracy compared to the benchmark methods. This is because for the models trained by CMIM, the clusters corresponding to the output probability of the DNNs are very concentrated, facilitating easier classification of samples from different classes.

\begin{table}[h]
% \vskip -0.1in
\caption{Top-1 accuracy (\%) of models trained by defense methods on CIFAR-100 and TinyImageNet. The best and second best results are \textbf{bolded} and \underline{underlined}, respectively.}
% \vskip -0.1in
\resizebox{1\textwidth}{!}{\begin{tabular}{|cccccccccc|ccccccc|}
\hline
\rowcolor{mygray}\multicolumn{10}{|c|}{CIFAR100} & \multicolumn{7}{c|}{TinyImageNet} \\ \toprule \hline
\multicolumn{1}{|c|}{Model} & CE    & MAD   & APGP  & RSP   & \multicolumn{1}{l}{ST} & NT    & SNT   & LS    & CMIM  & \multicolumn{1}{c|}{Model} & CE    & RSP   & ST    & NT    & LS    & CMIM  \\ \hline
\multicolumn{1}{|c|}{VGG16}   & 73.75 & 73.75 & 73.84 & 73.71 & 73.75                  & 73.75 & 72.59 & \textbf{73.90} & \underline{73.84} & \multicolumn{1}{c|}{RN34}    & 65.39 & 65.21 & 65.39 & 65.23 & \underline{65.45} & \textbf{65.99} \\
\multicolumn{1}{|c|}{RN50}    & 77.81 & 77.81 & 77.56 & 77.63 & 77.81                  & 77.31 & 77.77 & \underline{78.45} & \textbf{78.72} & \multicolumn{1}{c|}{RN50}    & \underline{66.14} & 65.91 & 66.13 & 66.06 & 66.09 & \textbf{66.93} \\ \hline
\end{tabular}} \label{tab:acc}
\end{table}

\begin{table}[h!]
\centering
\caption{Top-1 accuracy (\%) of models trained by defense methods on ImageNet.}
% \vskip -0.1in
\resizebox{0.3\textwidth}{!}{\begin{tabular}{|cccc|}
\hline
\rowcolor{mygray} \multicolumn{4}{|c|}{ImageNet} \\ \toprule \hline 
\multicolumn{1}{|c|}{Model} & CE    & ST                        & CMIM  \\ \hline
\multicolumn{1}{|c|}{RN34}    & \underline{73.31} & \multicolumn{1}{c}{73.30} & \textbf{73.69} \\ \hline
\end{tabular}} \label{tab:imagenetacc}
\end{table}

The results in \cref{tab:acc} motivate us to test the top-1 accuracy of additional models trained by CMIM and compare them with those trained by CE loss (see \cref{sec:additional}).

It is worth noting that the primary focus of this paper is not on increasing the accuracy of DNNs but on developing a method to train undistillable DNNs. While many existing methods in the literature can enhance a DNN’s accuracy, they do not address the critical challenge of making DNNs undistillable.

Our approach is the first in the literature that effectively trains undistillable models robust against a wide range of existing KD methods. The improvement in accuracy observed in our results is a by-product of our method and not its primary goal. This improvement arises from the unique properties of our approach rather than replicating the effects of label smoothing or similar techniques.

\begin{figure*}[!t] 
% \vskip -0.2in
	\centering
	\subfloat[LS]{\includegraphics[width=0.30\columnwidth]{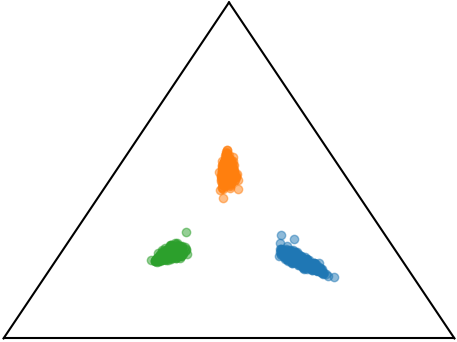}\label{fig:CESimplex}} 
	\subfloat[Nasty teacher] {\includegraphics[width=0.30\columnwidth]{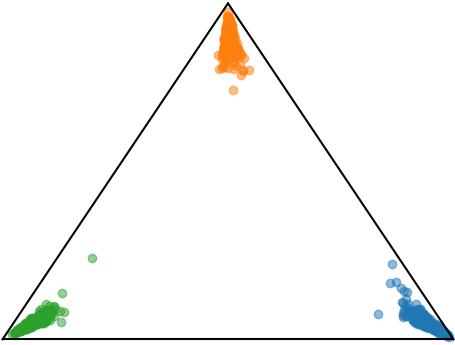}\label{fig:NastySimplex}}
 \subfloat[CMIM]{\includegraphics[width=0.30\columnwidth]{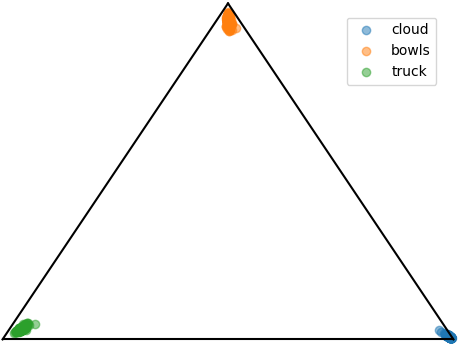}\label{fig:CMIMSimplex}}
	% \vskip -0.1in
    \caption{Visualization of three projected probability clusters for ResNet-50 trained on CIFAR-100 using (a) LS, (b) NT, and (c) CMIM.}
    \label{fig:Simplex}
	% \vskip -0.3in
\end{figure*}

\subsection{Visualizing the Output Clusters} \label{sec:visual}
In this subsection, we aim to visualize the output clusters for models trained using CE, NT, and CMIM. To achieve this, we follow the visualization approach introduced by \citet{10900607}. Specifically, we randomly select three labels from the CIFAR-100 dataset. For each probability distribution corresponding to these three labels, we extract only the probabilities associated with these selected labels and normalize them to form three-dimensional probability vectors. These vectors are then projected onto a two-dimensional simplex, allowing us to visualize the clustering behavior of each model. By applying this transformation, we obtain a clear representation of how the models distribute their probability mass across different output categories.  

The resulting simplexes for ResNet-50 models trained with LS, Nasty Teacher, and the CMIM framework are shown in \Cref{fig:Simplex} \footnote{In Appendix \ref{Appendix:MoreSimplex}, we present the visualization of three different projected probability clusters for ResNet-50 trained on CIFAR-100.}. To ensure a consistent comparison, we applied the same power transform $\alpha = 4$ for all visualizations. As observed, the clusters for the model trained with CMIM are highly concentrated near the corners of the simplex, closely resembling one-hot vectors. This indicates that the output distributions are highly compact, making it difficult for a knockoff student to surpass LS regularization when attempting to distill knowledge from a CMIM-trained model. This increased compactness plays a crucial role in enhancing the model's resistance to knowledge distillation.

Lastly, we clarify the distinction between ``highly concentrated output clusters'' and ``overly confident predictions''. A highly concentrated output cluster does not necessarily imply that the model produces overly confident predictions. This is because the clusters can be concentrated around points that are not close to one-hot labels (the corners of the probability simplex). As a result, the model can have concentrated outputs without being overly confident. These are two separate concepts. 

To illustrate this, we train an RN50 model on the CIFAR-100 dataset using three different methods: CMIM, CE, and LS. After training, we evaluate the model's confidence by measuring the average entropy of its output probability vectors on the test dataset. The results, presented in \cref{Tab:EntropyVals}, indicate that the entropy for CMIM is higher than that for CE, demonstrating that CMIM produces less confident output probabilities.

\begin{table}[h!]
\caption{Entropy Value of the Models Trained by CMIM, CE and LS}
\begin{center}
\label{Tab:EntropyVals}
% \vskip -0.2in
\resizebox{0.35\textwidth}{!}{\begin{tabular}{|c|c|c|c|c|}
\hline \rowcolor{mygray} 
Methods & CMIM & CE & LS    \\ \toprule  \hline
Entropy  & 0.102 & 0.064	& 0.152		 \\ \hline
\end{tabular}}
\end{center}
\end{table}

Additionally, our experiments in \cref{Tab:BenchmarkCIFAR100} further support the above-mentioned claim: the CMIM method generates more concentrated output clusters while also improving the model’s accuracy on the held-out test dataset compared to models trained with the conventional CE loss. This observation is consistent with the findings of \citet{10900607}, which demonstrate that training DNNs to produce highly concentrated output clusters can lead to improved test accuracy.

\subsection{Why Prior Defense Methods Can be Made Distillable?}
In this section, we address the question of why DNNs trained using all prior KD-resistance defense methods can still be made distillable, as demonstrated in \Cref{exp:acck}. The key reason behind this lies in the fundamental characteristics of these defense methods and their susceptibility to distillation under certain conditions. Specifically, by appropriately tuning the power transform parameter $\alpha$, the models trained using these defense methods can attain a relatively high CMI value in comparison to our proposed method, CMIM (as illustrated in \Cref{fig:distillable}). As though, compared to the CMIM model, all other defense methods provide more information for the student to leverage in improving their own performance. 
 
This suggests that, despite their initial resistance, the defense methods fail to enforce undistillability rigorously across all distillation settings. When the correct $\alpha$ value is selected during the distillation process, a logit-based KD approach can leverage this property to effectively distill knowledge from these supposedly resistant DNNs. Consequently, these models can still be exploited to produce distilled students that outperform the baseline LS student, demonstrating that prior defense strategies do not provide robust protection against all KD methods.
\begin{figure}[!t] 
% \vspace{-4mm}
	\centering
	\subfloat[ResNet-50]{\includegraphics[width=0.4\columnwidth]{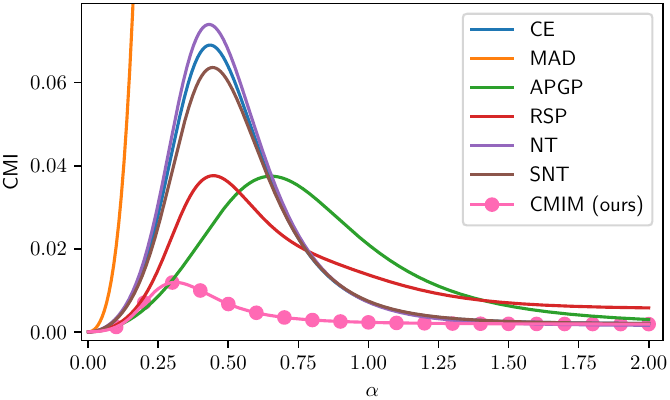}\label{fig:ResNet-50}}
	\subfloat[VGG16]{\includegraphics[width=0.4\columnwidth]{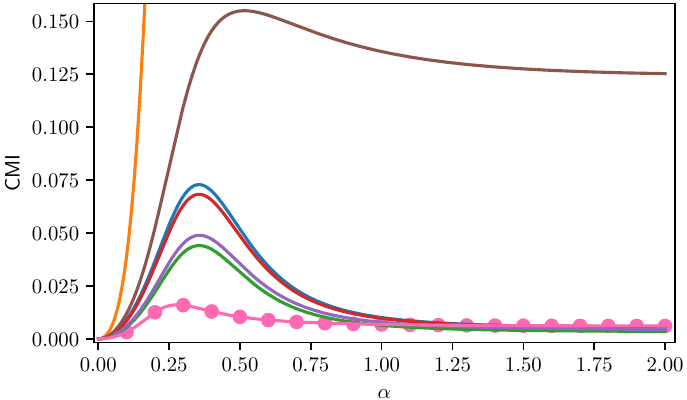}\label{fig:VGG16}}
	% \vskip -0.1in
    \caption{The CMI values for the models trained by different KD-resistance defence methods Vs. power transform value $\alpha$ for (a) ResNet-50, and (b) VGG16 trained on CIFAR-100 dataset. Compared to all other defense methods in the literature, CMIM effectively reduces peak CMI values under temperature scaling, which prevents the teacher model from being distilled by the knockoff student.}
    \label{fig:distillable}
	% \vskip -0.2in
\end{figure}

\section{Conclusion and Future Directions}
In this paper, from an information-theoretic perspective, we proposed a defence method against the threat posed by knockoff students attempting to steal the IP of pre-trained DNNs via logit-based KD methods. In particular, we proposed to minimize the CMI of the protected DNN across different power transform hyper-parameter values $\alpha$, while minimizing the conventional CE loss simultaneously. We referred to model trained by these framework as CMIM models. By conducting a series of experiments, we showed that, unlike the prior defense methods proposed in the literature, a knockoff student cannot render CMIM models distillable. In addition, we showed that the models trained by CMIM achieve higher classification accuracy compared to those trained by CE loss. 

Despite these promising results, our work has certain limitations. First, the evaluation of CMIM models is primarily empirical, as providing a formal theoretical proof of undistillability remains an open challenge.  Second, our approach introduces additional computational overhead compared to the conventional training using CE loss.

For future work, we aim to extend the CMIM method beyond standard classification tasks. Potential directions include adapting it for multi-label classification, regression problems, and safeguarding the intellectual property of cutting-edge models such as LLMs, CLIP, and diffusion models. Another promising direction is to enhance the model’s undistillability by adapting CMIM for integration with a contrastive learning framework, which regularizes the latent space rather than the output probability space. By broadening the scope of CMIM, we hope to further enhance its applicability and effectiveness across a wider range of machine learning paradigms.

\bibliography{tmlr}
\bibliographystyle{tmlr}

\newpage
\appendix
\section{Appendix}

\section{A Thorough Study of Related Works} \label{sec:app_rel}
\subsection{Logit-based KD methods} \label{sec:logit}
Knowledge Distillation (KD) has become a cornerstone technique for compressing large teacher models into smaller student models. This section reviews key logit-based KD methods and explores their advancements.

\citet{hinton2015distilling} introduced the foundational concept of KD, using KL divergence to match the student's softmax outputs to the teacher's. This work laid the groundwork for logit-based methods, as the softmax output directly relates to the logits. Later, \citet{moldovan2019path} proposed Path-KD, a method that utilizes the paths leading to the correct class in both teacher and student models for distillation. While not directly logit-based, it demonstrates the effectiveness of aligning decision-making processes. \citet{guo2021logit} proposed Logit-Like Distillation, addressing the capacity gap by matching the ranking of logits instead of their exact values [4]. This approach allows the student to learn the essential ordering of classes even with limited capacity. \citet{an2021relation}  proposed relation knowledge distillation (RKD), focusing on aligning relationships between class logits rather than individual values. This approach improves the student's ability to generalize to unseen data. \citet{DKD} introduced decoupled knowledge distillation (DKD), where it decouples the classical KD loss into two parts: target
class knowledge distillation and non-target class
knowledge distillation. \citet{huang2022knowledge} proposed DIST where they designed a KD method to distill better from a stronger teacher; indeed they claim that preserving the relations between the predictions of teacher and student would suffice for an effective KD. \citet{borup2021even} provided theoretical arguments for the importance of weighting the teacher outputs
with the ground-truth targets when performing self-distillation with kernel ridge regressions along
with a closed form solution for the optimal weighting parameter. \citet{hamidi2024train} propose to train the teacher model using mean square loss instead of the conventional cross entropy loss. \citet{salamah2025coded} proposed a KD framework that leverages adaptive JPEG encoding to make the  teacher's output responses less peaky for image classification tasks.

\subsection{Defense methods against Logit-based KD} 
As also discussed in \Cref{sec:related}, the defense methods against  the threat posed by knockoff students attempting to steal the IP of pre-trained DNNs via logit-based KD methods can be categorized into two approaches. Here, we elaborate on these two approaches.  

\noindent  \textbf{(I)} \textbf{Model stealing resistant training:} 
In this approach, DNNs are trained to reduce the accuracy of knockoff students while maintaining the original classification accuracy of the model. In particular, \citet{nasty} proposed a training algorithm named self-undermining KD to create nasty teachers (NT) that prevent knowledge leakage and unauthorized model stealing through KD, without compromising model accuracy. The nasty teacher is trained by minimizing the following objective function: 
\begin{align}
{\cal L}^{NT} = \mathsf{H}(y, q_x) - \epsilon ~ \mathrm{KL}(\tilde{q}_x, q_x),
\end{align}
where $\tilde{q}_x$ is a output of a pre-trained standard model.

Subsequently, \citet{SNT} proposed semantic nasty teachers (SNT) which improve the model stealing resistance of NT by disentangling semantic relationships in the output logits during teacher model training, which is crucial for successful KD.

\noindent \textbf{(II)} \textbf{post-training defence methods:}
The aim of these approaches is to deceive the knockoff by imposing minimal perturbations to the model's predictions. \citet{RSP} tested a variety of possible perturbation forms, and found that the reverse sigmoid perturbation (RSP) to be the most effective one. \citet{MAD} introduced maximizing angular deviation (MAD), a technique that perturbs the output probabilities, leading to an adversarial gradient signal that deviates significantly from the original gradient of the knockoff. To this end, they applied a randomly initialized model as the surrogate for the potential knockoff. More recently, \citet{APGP} proposed a plug-and-play generative perturbation model, dubbed as accuracy preserving generative perturbation (APGP), which can effectively defend KD-based model cloning, while preserve the model utility.

\subsection{Attack Methods Using Logit-based KD} \label{sec:attack}

\citet{HTC}  sought to circumvent the defense of nasty teachers and steal (or extract) its information. Specifically, they analyzed nasty teacher from two different angles and subsequently leverage them carefully to develop simple yet efficient methodologies, named as HTC and SCM, which enhance  learning from nasty teacher.

In AVG \citep{keser2023averager}, the authors noted that undistillable teachers exhibit multiple peaks in their softmax response, which are transferred to the student models. These peaks are considered to be the primary factor that misleads the student models. To mitigate the influence of the multiple peaks in the softmax response of teachers, they proposed transferring the mean of features with the same labels as the soft labels.

\citet{orekondy2019knockoff} introduced a technique called "Knockoff Nets" that allows an attacker to steal the functionality of black-box models. Remarkably, the attacker only needs to interact with the model by feeding it input data and observing the resulting predictions. By training a new model ("knockoff") on these input-prediction pairs, the attacker can create a copycat model that performs similarly to the original black box.

\section{Why LS reduce the knockoff student's accuracy?}\label{Appendix:LsDefense}
Original aiming to prevent overfitting and improve generalization, label smoothing was observed by \citet{muller2019does} to reduce the accuracy of the knockoff student. The researchers found that "label smoothing encourages examples to lie in tight, equally separated clusters". Consequently, label smoothing reduces the contextual information in the teacher model's output \citep{10619241}.

\section{Power Transformation of Mutual Information}
\label{app:proofderiv}

The following theorem implies that for each label $y$, $ \mathrm{I}(X; \hat{Y}^{\alpha}|Y=y)   $ as a function of $\alpha$ is continuously differentiable. 
\begin{theorem} \label{th:deriv}
Let $(X,Z)$ be a pair of random variables, where $Z$ is discrete, and $X$ can be either discrete or continuous. Let $P_{Z|X}[\cdot|x]$ denote the conditional probability distribution of $Z$ given $X=x$. Additionally, let $P^{\alpha}_{Z|X}[\cdot|x]$ denote the power transformed version of $P_{Z|X}[\cdot|x]$ with power $\alpha$, $Z^{\alpha}$ denote the random variable the conditional distribution of which given $X=x$ is $P_{Z|X}^{\alpha}[\cdot|x]$, and $q^{\alpha}$ denote the probability distribution of $Z^{\alpha}$. Then, the following holds 
\begin{align} \label{eq:partialalpha}
\frac{\partial ~\mathrm{I} (X; Z^\alpha)}{\partial ~ \alpha} = \frac{1}{\alpha}  \sum_x P_X [x] \Big\{ \big( m_2(P_{Z|X}^{\alpha} [\cdot |x ]) - m_1^2 (P_{Z|X}^{\alpha} [\cdot |x ])\big)  - \text{Cov} (P_{Z|X}^{\alpha} [\cdot | x] ,q^\alpha)    \Big\}, 
\end{align} 
where for probability vectors $P$ and $Q$,   
% \small
\begin{subequations}\label{eq:ms}
\begin{align}
m_1(P) &\defeq  \sum_j P[j] \big( -\ln (P[j] )\big) = \mathsf{H}(P), \quad \text{(Shannon entropy)}   \\
m_2(P) &\defeq  \sum_j P[j] \big( -\ln (P[j] )\big)^2 , \quad \text{(Second moment)}\\
\text{Cov} (P,Q) &\defeq \sum_j P[j] \Big( -\ln(P[j]) - m_1(P) \Big) \Big( -\ln (Q[j]) - \sum_i P[i] (-\ln (Q[i]) )\Big).
\end{align}
\end{subequations}
% \normalsize

\end{theorem}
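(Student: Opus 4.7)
The plan is to decompose $\mathrm{I}(X; Z^\alpha) = \mathsf{H}(Z^\alpha) - \mathsf{H}(Z^\alpha \mid X) = m_1(q^\alpha) - \sum_x P_X[x]\, m_1(p_x^\alpha)$, where I write $p_x^\alpha \defeq P_{Z|X}^\alpha[\cdot|x]$, and then differentiate each of the two entropy terms separately in $\alpha$. The core lemma I would establish first is a usable formula for $\partial p_x^\alpha[z]/\partial \alpha$. Writing $p_x^\alpha[z] = p_x[z]^\alpha / Z_x(\alpha)$ with $Z_x(\alpha) = \sum_{z'} p_x[z']^\alpha$, a direct differentiation gives $\partial p_x^\alpha[z]/\partial \alpha = p_x^\alpha[z]\big(\ln p_x[z] - \sum_{z'} p_x^\alpha[z']\ln p_x[z']\big)$. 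Substituting the identity $\ln p_x[z] = (\ln p_x^\alpha[z] + \ln Z_x(\alpha))/\alpha$ into both occurrences of $\ln p_x$ lets me re-express this as
\[
\frac{\partial p_x^\alpha[z]}{\partial \alpha} = \frac{p_x^\alpha[z]}{\alpha}\big(m_1(p_x^\alpha) - \ell_x[z]\big), \qquad \ell_x[z] \defeq -\ln p_x^\alpha[z].
\]
This rewrite is crucial: it generates the $1/\alpha$ prefactor in the target formula, and it turns $m_1(p_x^\alpha) - \ell_x[z]$ into a centred random variable under $p_x^\alpha$, which is precisely what will allow a covariance to appear.

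From here the two entropy derivatives follow by direct computation. For the conditional entropy, the standard shortcut $\partial m_1(p_x^\alpha)/\partial \alpha = -\sum_z (\partial p_x^\alpha[z]/\partial \alpha)\ln p_x^\alpha[z]$ (the extra $+1$ term cancels since $\sum_z \partial p_x^\alpha[z]/\partial \alpha = 0$), combined with the formula above, yields $\partial m_1(p_x^\alpha)/\partial \alpha = (1/\alpha)\big[m_1^2(p_x^\alpha) - m_2(p_x^\alpha)\big]$, which contributes $(1/\alpha)\big[m_2(p_x^\alpha) - m_1^2(p_x^\alpha)\big]$ to $\partial \mathrm{I}/\partial \alpha$ after the sign flip coming from the entropy decomposition. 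For the marginal entropy, writing $q^\alpha[z] = \sum_x P_X[x]\, p_x^\alpha[z]$ and applying the same shortcut produces
\[
\frac{\partial m_1(q^\alpha)}{\partial \alpha} = \frac{1}{\alpha}\sum_x P_X[x] \sum_z p_x^\alpha[z]\big(m_1(p_x^\alpha) - \ell_x[z]\big)\big(-\ln q^\alpha[z]\big).
\]
Adding the two contributions then gives the claimed formula.

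The step I expect to be the main obstacle---really the only nontrivial identification---is recognising the inner sum in the last display as exactly $-\mathrm{Cov}(p_x^\alpha, q^\alpha)$ in the sense of \eqref{eq:ms}. For this, I would first rewrite the covariance in the equivalent compact form $\mathrm{Cov}(P,Q) = \sum_j P[j]\,\ell^P[j]\,\ell^Q[j] - m_1(P)\sum_j P[j]\,\ell^Q[j]$ (obtained by expanding one factor in the definition and using that the other centres to zero), and then match it to the inner sum with $\ell^P = \ell_x$ and $\ell^Q = -\ln q^\alpha$. Beyond this bookkeeping, everything is routine: $Z_x(\alpha)$ is smooth and strictly positive for $\alpha > 0$, so termwise differentiation in $z$ is justified, and exchanging differentiation with the sum (or expectation) over $x$ is standard under the tacit finiteness/integrability conditions suggested by the statement.
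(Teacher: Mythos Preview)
Your proposal is correct and follows essentially the same route as the paper: both decompose $\mathrm{I}(X;Z^\alpha)=\mathsf{H}(q^\alpha)-\sum_x P_X[x]\,\mathsf{H}(p_x^\alpha)$, compute $\partial p_x^\alpha[z]/\partial\alpha$, pull out the $1/\alpha$ by rewriting $\ln p_x$ in terms of $\ln p_x^\alpha$ (the paper does this via $\alpha\ln p_x[j]=\ln(p_x[j])^\alpha$ and cancelling the common normalizer, which is equivalent to your substitution $\ln p_x[z]=(\ln p_x^\alpha[z]+\ln Z_x(\alpha))/\alpha$), and then identify the two entropy derivatives as $-(m_2-m_1^2)/\alpha$ and $-\mathrm{Cov}(p_x^\alpha,q^\alpha)/\alpha$ respectively. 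Your presentation is slightly more streamlined in that you isolate the derivative formula as a standalone lemma and drop the $+1$ term up front via $\sum_z\partial p_x^\alpha[z]/\partial\alpha=0$, whereas the paper carries the $+1$ through and lets it cancel in the algebra, but this is only a cosmetic difference.
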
 

\begin{proof}
To simplify our notation, we denote the conditional distributions $  P_{Z|X}[\cdot |x]  $ and $ P^\alpha_{Z|X}[j |x]   $ by $p_x$ and $p_x^{\alpha}$, respectively. Decompose $\mathrm{I} (X; Z^\alpha) $ as follows
\begin{align}
\mathrm{I} (X; Z^\alpha) & = H(Z^{\alpha}) - H (Z^{\alpha} | X) \nonumber \\
   & = \mathrm{H} (q^\alpha) - \sum_x P[x] \mathrm{H}(p_x^\alpha) \label{eq:th1-1}
\end{align}
where for any random variables $U$ and $V$, $H(V)$ and $H(V|U)$ are the entropy of $V$ and the conditional entropy of $V$ given $U$, respectively, and $H(p)$ denotes the entropy of the probability distribution $p$. Then the partial derivative of $\mathrm{I} (X; Z^\alpha)$ w.r.t. $\alpha$ is equal to
\begin{align} \label{app:eq1}
\frac{\partial \mathrm{I} (X; Z^\alpha)}{\partial \alpha} =  \frac{\partial \mathrm{H} (q^\alpha)}{\partial \alpha}  - \sum_x P[x]  \frac{\partial \mathrm{H}(p_x^\alpha)}{\partial \alpha}. 
\end{align}

To continue, we first compute the partial derivative in the second term of  the RHS of \eqref{app:eq1}
\small
\begin{align}
&\frac{\partial \mathrm{H}(p_x^\alpha)}{\partial \alpha} =   \frac{-\partial \sum_j p_x^\alpha [j ] \ln (p_x^\alpha [j])}{\partial \alpha} = - \sum_j \Big( \ln (p_x^\alpha [j ])+1 \Big) \frac{\partial p_x^{\alpha} [j ] }{\partial \alpha} \nonumber \\
&= - \sum_j \Big(\ln (p_x^\alpha [j ])+1 \Big) \nonumber \\
& \times \frac{(p_x [j ])^\alpha \ln (p_x [j]) \Big( \sum_i (p_x [i ])^\alpha \Big) -(p_x [j ])^\alpha \Big( \sum_i (p_x [i ])^\alpha \ln p_x [i] \Big)} {\Big( \sum_i (p_x [i ])^\alpha \Big)^2}  \nonumber \\
&=  - \sum_j \Big(\ln (p_x^\alpha [j ])+1 \Big) \Big(p_x^\alpha [j ] \big( \ln ( p_x [j ]) -\sum_i p_x^\alpha [i ] \ln (p_x [i]) \big) \Big)  \nonumber \\
&= \frac{-1}{\alpha} \sum_j \Big(\ln (p_x^\alpha [j ])+1 \Big) p_x^\alpha [j ] \Big( \ln (p_x [j ])^\alpha - \sum_i p_x^\alpha [i |] \ln (p_x [i ])^\alpha  \Big)  \label{eq:app1-}\\
&= \frac{-1}{\alpha} \sum_j \Big(\ln (p_x^\alpha [j ])+1 \Big) p_x^\alpha [j ] \Big( \ln (p_x^\alpha [j ]) - \sum_i p_x^\alpha [i] \ln (p_x^\alpha [i ])  \Big)  \nonumber \\
&= \frac{-1}{\alpha} \Bigg( \sum_j p_x^\alpha [j ] \big( \ln (p_x^\alpha [j ]) \big)^2 - \Big( \sum_j p_x^\alpha [j ] \ln (p_x^\alpha [j ]) \Big) \Big( \sum_i p_x^\alpha [i ] \ln (p_x^\alpha [i ]) \Big) \Bigg)  \nonumber \\ \label{eq:app1}
&= \frac{-1}{\alpha} \Big( m_2(p_x^\alpha ) - m^2_1(p_x^\alpha ) \Big)
\end{align}
\normalsize

Note that 
  \[ q^{\alpha} = \sum_x P[x] p_x^{\alpha}.\]
Then we have
% \small
\begin{align}
&\frac{\partial \mathrm{H} (q^\alpha)}{\partial \alpha}  =\frac{-\partial \sum_j q^\alpha[j] \ln(q^\alpha[j])}{\partial \alpha} =-  \sum_j \Big(\ln(q^\alpha[j]) +1 \Big) \frac{\partial q^\alpha[j]}{\partial \alpha} \nonumber \\ 
& = -  \sum_j \Big(\ln(q^\alpha[j]) +1 \Big) \sum_x P[x] \frac{\partial p_x^\alpha[j ] }{\partial \alpha} \nonumber  \\ 
& = \frac{-1}{\alpha}  \sum_j \Big(\ln(q^\alpha[j]) +1 \Big) \sum_x P[x] p_x^\alpha[j ] \Big( \ln (p_x^\alpha[j ]) + m_1(p_x^\alpha) \Big) \label{eq:app2-} \\ 
& = \frac{-1}{\alpha}  \sum_j \Big(\ln(q^\alpha[j]) \Big) \sum_x P[x] p_x^\alpha[j ] \Big( \ln (p_x^\alpha [j]) + m_1(p_x^\alpha) \Big) \nonumber  \\ 
& = \frac{-1}{\alpha}   \sum_x P[x]  \sum_j \Big(\ln(q^\alpha[j]) \Big) p_x^\alpha [j] \Big( \ln (p_x^\alpha [j ]) + m_1(p_x^\alpha) \Big)  \nonumber \\ 
& = \frac{-1}{\alpha}   \sum_x P[x] \Bigg(  \sum_j  p_x^\alpha [j ] \ln (p_x^\alpha [j]) \Big(\ln(q^\alpha[j]) \Big)  \nonumber \\
&\qquad \qquad - m_1(p_x^\alpha)  \sum_j  p_x^\alpha [j] \Big( - \ln(q^\alpha[j]) \Big)   \Bigg) \nonumber \\ \label{eq:app2}
&= \frac{-1}{\alpha} \sum_x P[x]~ \text{Cov} \left( p_x^\alpha , q^\alpha \right)
\end{align}
where \eqref{eq:app2-} is due to \eqref{eq:app1-}.
% \normalsize

From \Cref{eq:app1,eq:app2}, \Cref{th:deriv} follows. 
\end{proof}

\section{Proof of \Cref{th2}} \label{app:theorem2}

\Cref{th2} follows from \Cref{th:deriv} and the following lemma. 

\begin{lemma} \label{le1}
Let $g(t)$ be a continuously differentiable function over $[0, \beta]$. Then the following holds:
\begin{equation}
     \max_{t} g(t)   = \lim_{\omega \to \infty} \frac{1}{\omega} \ln \frac{1}{\beta} \int_0^{\beta} \exp{ \{\omega 
     g(t) \} } d t . \label{eq:le1}
 \end{equation}   
\end{lemma}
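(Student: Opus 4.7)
The plan is to prove the classical log-sum-exp converges to max identity by squeezing both sides around $M := \max_{t \in [0,\beta]} g(t)$. Since $g$ is continuously differentiable on the compact interval $[0,\beta]$, it is continuous there, and $M$ is attained at some $t^\star \in [0,\beta]$. Continuous differentiability is stronger than needed; continuity will suffice.

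For the upper bound, I would simply use $g(t) \le M$ pointwise on $[0,\beta]$ to get
\[
\frac{1}{\beta}\int_0^{\beta} \exp\{\omega g(t)\}\, dt \;\le\; \frac{1}{\beta}\int_0^{\beta} \exp\{\omega M\}\, dt \;=\; \exp\{\omega M\},
\]
so that $\frac{1}{\omega}\ln\!\bigl(\frac{1}{\beta}\int_0^{\beta} \exp\{\omega g(t)\}\, dt\bigr) \le M$ for every $\omega > 0$, and hence the $\limsup$ as $\omega\to\infty$ is at most $M$.

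For the matching lower bound, fix an arbitrary $\epsilon > 0$. By continuity of $g$ at $t^\star$, there exists an interval $I_\epsilon \subset [0,\beta]$ of positive Lebesgue measure $\delta = \delta(\epsilon) > 0$ on which $g(t) \ge M - \epsilon$. (If $t^\star$ is an endpoint I would take $I_\epsilon$ to be a one-sided neighborhood; the compactness of $[0,\beta]$ plus continuity guarantees such a $\delta > 0$.) Restricting the integral to $I_\epsilon$ gives
\[
\frac{1}{\beta}\int_0^{\beta} \exp\{\omega g(t)\}\, dt \;\ge\; \frac{\delta}{\beta}\,\exp\{\omega(M-\epsilon)\},
\]
so $\frac{1}{\omega}\ln\!\bigl(\frac{1}{\beta}\int_0^{\beta} \exp\{\omega g(t)\}\, dt\bigr) \ge (M-\epsilon) + \frac{1}{\omega}\ln(\delta/\beta)$. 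Letting $\omega \to \infty$, the last term vanishes and the $\liminf$ is at least $M - \epsilon$. Since $\epsilon$ was arbitrary, the $\liminf$ is at least $M$, and together with the upper bound this yields the claimed equality.

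There is no real obstacle here; the only subtlety is that the maximizer might sit at a boundary point of $[0,\beta]$, which is why I phrase the level-set $\{g \ge M - \epsilon\}$ argument in terms of a neighborhood inside $[0,\beta]$ rather than a symmetric ball around $t^\star$. Once Lemma~\ref{le1} is in hand, Theorem~\ref{th2} follows immediately by applying it with $g(\alpha) = \mathrm{I}(X;\hat{Y}^{\alpha}\,|\,Y=y)$, which is continuously differentiable in $\alpha$ on $[0,\beta]$ by Theorem~\ref{th:deriv}.
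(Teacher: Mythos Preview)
Your proof is correct and follows essentially the same squeeze argument as the paper: an easy pointwise upper bound giving $\limsup \le M$, and a lower bound obtained by restricting the integral to a small interval near the maximizer, then letting $\omega\to\infty$ and finally $\epsilon\to 0$. The only cosmetic difference is that the paper parameterizes the neighborhood by its length $\epsilon$ and bounds below by $\min_{t\in\mathcal{N}(t^*,\epsilon)} g(t)$ before sending $\epsilon\to 0$, whereas you directly invoke continuity to find an interval on which $g\ge M-\epsilon$; both routes are equivalent.
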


\begin{proof}
Let $t^*$ be an optimal point at which 
  \[ g(t^*) =  \max_{t} g(t) .\]
For any $ \epsilon >0 $, let ${\cal N} (t^*, \epsilon)$ denote a closed interval containing $t^*$ with length $\epsilon$. It is easy to verify that
 \[\frac{1}{\omega} \ln \frac{1}{\beta} \int_0^{\beta} \exp{ \{\omega 
     g(t) \} } d t  \leq g(t^*) \]
which implies that
 \begin{equation} \label{eq:le2}
\limsup_{\omega \to \infty}  \frac{1}{\omega} \ln \frac{1}{\beta} \int_0^{\beta} \exp{ \{\omega 
     g(t) \} } d t \leq g(t^*). 
 \end{equation}
On the other hand,
 \begin{align}
 \frac{1}{\omega} \ln \frac{1}{\beta} \int_0^{\beta} \exp{ \{\omega 
     g(t) \} } d t & \geq  \frac{1}{\omega} \ln \frac{1}{\beta} \int_{{\cal N}(t^*, \epsilon)} \exp{ \{\omega 
     g(t) \} } d t  \nonumber \\
     & \geq \frac{1}{\omega} \ln \frac{\epsilon}{\beta} \exp{ \{\omega \min_{t \in {\cal N} (t^*, \epsilon)} 
     g(t) \} } \nonumber \\
     & = \min_{t \in {\cal N} (t^*, \epsilon)}  g(t) + \frac{1}{\omega} \ln \frac{\epsilon}{\beta} . \label{eq:le3}  
 \end{align}
Letting $\omega \to \infty$ in \eqref{eq:le3} yields
 \begin{align}
     \liminf_{\omega \to \infty}  \frac{1}{\omega} \ln \frac{1}{\beta} \int_0^{\beta} \exp{ \{\omega 
     g(t) \} } d t \geq \min_{t \in {\cal N} (t^*, \epsilon)}  g(t). \label{eq:le4}
 \end{align}
Note that \eqref{eq:le4} is valid for any $\epsilon >0$. Letting $\epsilon \to 0$ in \eqref{eq:le4}, we have
 \begin{align}
     \liminf_{\omega \to \infty}  \frac{1}{\omega} \ln \frac{1}{\beta} \int_0^{\beta} \exp{ \{\omega 
     g(t) \} } d t \geq   g(t^*). \label{eq:le5}
 \end{align}
Then \eqref{eq:le1} follows from \eqref{eq:le2} and \eqref{eq:le5}. This completes the proof of \Cref{le1}. 
\end{proof}

\section{Datasets description} \label{sec:dataset}
\begin{itemize}
    \item CIFAR-100 \citep{CIFAR100} dataset contains 50K training and 10K test color images, each with size $32\times32$, categorized into 100 classes.

    \item TinyImageNet \citep{tinyiamgenet} contains 120K color images across 200 classes, each with a resolution of $64 \times 64$ pixels. For each class, there are 500 training images, 50 validation images and 50 test images.

    \item ImageNet \citep{imagenet} is a large-scale dataset used in visual recognition tasks, containing around 1.2 million training and 50K validation images.
\end{itemize}

\section{Experiments setup} \label{Appendix:ExperimentsSetup}
All experiments detailed in this paper were conducted using a publicly available national high-performance computer. For each experiment, we utilized 16 CPU cores, 64 GB of memory, and one NVIDIA V100 GPU. The software environment comprised Python 3.10, PyTorch 1.13, and CUDA 11.

For all experiments, including defenses and attacks, the SGD optimizer \citep{SGD, Efficientbackprop} with a learning rate of 0.1 is used unless otherwise specified.

For the CIFAR-100 and TinyImageNet datasets, we train the model for 200 epochs, decaying the learning rate by 0.1 at epochs {60, 120, 160}.

For ImageNet, we follow the standard PyTorch practice \footnote{\url{https://github.com/pytorch/vision/tree/main/references/classification}}. 

The batch size is 128 for both CIFAR-100 and TinyImageNet, and 256 for ImageNet.

To get the accuracy that a knockoff student can achieve using label smoothing, we have tested a wide spectrum of label smoothing factor $\epsilon=\{0.01, 0.05, 0.1, 0.15, 0.2, 0.25, 0.3, 0.4, 0.5, 0.6, 0.7, 0.8, 0.9\}$, and selected the value that yielded a classification accuracy exceeding that of all knockoff students.

In the CMIM method, we set $T = 20$ and tested $\lambda = {0.1, 0.25, 0.5, 1}$, selecting the value that minimized the CMI value while maintaining or improving classification accuracy.

\subsection{Defense setup} \label{Appendix:DefenseSetup}
We used the following parameters and settings for the defense models used in \Cref{sec:exp}.

    \subsubsection{Defense setup on CIFAR-100 and TinyImagenet}
    \begin{enumerate}
        \item \textbf{MAD}: 
        We employ a randomly initialized VGG-8 as adversary's architecture, and following the implementation of MAD-argmax.
        \item \textbf{APGP}: We apply a 3 layer MLP with residual connection as the generative model and set $\lambda = 0.1$ for all experiments.
        \item \textbf{RSP}: We use $\alpha = 1$ and $\beta=20$ for all the experiments.
        \item \textbf{NT}: To ensure a acceptable accuracy sacrifice, we test three different $\epsilon$ values and select the largest one that results in an accuracy drop of less than $0.5\%$. Specifically, we use $\epsilon = 0.01$ for ResNet-50 and $\epsilon = 0.005$ for VGG-16 on the CIFAR-100 dataset, while for TinyImageNet, we use $\epsilon = 0.001$  for both ResNet-34 and ResNet-50.
        \item \textbf{SNT}: We use the pretrained word2vect model namely "fasttext-wiki-news-subwords-300" provided by Gensim \citep{rehurek2011gensim}, and set $\lambda = 0.2$ for all experiments.
        \item \textbf{ST}: We use the teacher model trained by CE as the underlying model, and use the sparse ratio of $10\%$ as suggested in their paper for all experiments.
        \item \textbf{LS}: We apply label smoothing factor 0f 0.05 for all experiments.
        \item \textbf{ISTM}: We set the binary search parameters to $T_b =20$ and $\boldsymbol{\alpha}_{\rm max}=2000$. We use $\lambda = 0.2$ for ResNet-50 and $\lambda = 0.5$ for VGG-16 on the CIFAR-100 dataset, while for TinyImageNet, we use $\lambda = 0.1$  for ResNet-34 and $\lambda = 0.5$ ResNet-50.
    \end{enumerate}
    \subsubsection{Defense setup on Imagenet}
    \begin{enumerate}
        \item \textbf{ST}: We use the teacher model trained by CE as the underlying model, and use the sparse ratio of $10\%$ as suggested in their paper for all experiments.
        \item \textbf{ISTM}:  We set the binary search parameters to $T_b =20$ and $\boldsymbol{\alpha}_{\rm max}=2000$. We use $\lambda = 0.2$ for all the experiments.
    \end{enumerate}

\subsection{Attack setup}
    \subsubsection{Attack setup on CIFAR-100 and TinyImagenet}
    We use power transform parameter $\alpha = 0.25$ (or equivalently $T=4$) for all experiments unless otherwise specified.
    \begin{enumerate}
        \item \textbf{KD}: We set the CE-KL trade-off coefficient to $\lambda = 0.9$.
        \item \textbf{MKD}: We use the intrinsic dimension of 3 for CIFAR-100, and 5 for TinyImageNet. We employed the Adam optimizer \citep{kingma2014adam} with learning rate $10^{-3}$ for the trainable Markov transform.
        \item \textbf{DKD}: We test alpha, beta pairs of $\{1,4\}$ and $\{2,8\}$, and report the one with best accuracy.
        \item \textbf{DIST}: We $use \beta=1.0, \gamma=1.0, \tau=1.0$ for all experiments.
        \item \textbf{HTC}: We use $\alpha = 0.05 (T=20)$, $\lambda = 0.01$ for all experiments.
        \item \textbf{AVG}: $\lambda = 0.9$.
        \item \textbf{Knockoff}: We follow the implementation of the original paper.
    \end{enumerate}
    \subsubsection{Attack setup on Imagenet}
    We use $\alpha = 1$ ($T=1$) for all experiments unless otherwise specified.
    \begin{enumerate}
        \item \textbf{KD}: $\lambda = 0.9$.
        \item \textbf{MKD}: We use the intrinsic dimension of 16. We employed the Adam optimizer \citep{kingma2014adam} with learning rate $10^{-3}$ for the trainable Markov transform.
        \item \textbf{DKD}: We test alpha, beta pairs of $\{1,4\}$ and $\{2,8\}$, and report the one with best accuracy.
    \end{enumerate}

\section{Accuracy of Protected models} \label{sec:additional}
In this section, we report the top-1 accuracy of some additional models that are trained using CMIM and compare them with those trained by CE method. To this end, we use 10 well-known models for CIFAR-100 dataset namely ResNet (RN)-$\{18, 34, 50, 101, 152\}$, SqueezeNet (SQN), ResNext (RNXT) 50, MobileNet (MN), Xception (XCP), DenseNet (DN) 121; and 2 models namely RN-$\{34, 50\}$ for TinyImageNet and ImageNet. We follow the same training recipe as the one in \cref{exp:acck}. The results for CIFAR-100 and (Tiny-)ImageNet are listed in \cref{Tab:Cifar100ProtectedModelsGAin} and
\cref{Tab:ImgNetProtectedModelsGAin}, respectively. As seen, the top-1 accuracy for all models trained by CMIM is consistently higher than those trained by CE counterpart, with the gain up to $1.15\%$.

\begin{table}[h!]
\centering
\caption{Top-1 accuracy (\%) of models trained by CE and CMIM methods on CIFAR-100.}
\vskip -0.1in
\label{Tab:Cifar100ProtectedModelsGAin}
\begin{tabular}{|cccccc|}
\hline
\rowcolor{mygray} \multicolumn{6}{|c|}{CIFAR-100}   \\ \toprule \hline
\multicolumn{1}{|c|}{Model} & CE    & \multicolumn{1}{c|}{CMIM}  & \multicolumn{1}{c|}{Model}       & CE    & CMIM  \\ \hline
\multicolumn{1}{|c|}{RN18}  & 76.05 & \multicolumn{1}{c|}{77.20} & \multicolumn{1}{c|}{SQN}  & 69.32 & 70.64 \\
\multicolumn{1}{|c|}{RN34}  & 77.20 & \multicolumn{1}{c|}{77.54} & \multicolumn{1}{c|}{RNXT50}     & 78.71 & 79.12 \\
\multicolumn{1}{|c|}{RN50}  & 77.81 & \multicolumn{1}{c|}{77.93} & \multicolumn{1}{c|}{MN}   & 67.26 & 67.51 \\
\multicolumn{1}{|c|}{RN101} & 79.07 & \multicolumn{1}{c|}{79.12} & \multicolumn{1}{c|}{XCP}    & 77.37 & 77.64 \\
\multicolumn{1}{|c|}{RN152} & 79.21 & \multicolumn{1}{c|}{79.43} & \multicolumn{1}{c|}{DN121} & 79.16 & 79.33 \\ \hline
\end{tabular}
\end{table}

\begin{table}[h!]
\centering
\caption{Top-1 accuracy (\%) of models trained by CE and CMIM methods on TinyImageNet and ImageNet.}
\vskip -0.1in
\label{Tab:ImgNetProtectedModelsGAin}
\begin{tabular}{|ccc|ccc|}
\hline
\rowcolor{mygray}\multicolumn{3}{|c|}{TinyImageNet}          & \multicolumn{3}{c|}{ImageNet}              \\ \toprule \hline
\multicolumn{1}{|c|}{Model} & CE    & CMIM  & \multicolumn{1}{c|}{Model} & CE    & CMIM  \\ \hline
\multicolumn{1}{|c|}{RN34}  & 65.39 & 65.99 & \multicolumn{1}{c|}{RN34}  & 73.31 & 73.69 \\
\multicolumn{1}{|c|}{RN50}  & 66.14 & 66.93 & \multicolumn{1}{c|}{RN50}  & 76.15 & 76.40 \\ \hline
\end{tabular}
\end{table}

\clearpage
\section{Variance of \cref{Tab:BenchmarkCIFAR100}}
\label{app:variance}
\begin{table}[h!]
% \vspace{-0.2in}
\caption{Top-1 accuracy (\%) and variance of the knockoff student on CIFAR-100 and TinyImageNet dataset (averaged over 3 runs)}
\vskip -0.1in
\label{Tab:VarCIFAR100}
\begin{center}
\resizebox{1\textwidth}{!}{
\begin{tabular}{|cccccccccccc|}
\hline
\rowcolor{mygray} \multicolumn{11}{|c|}{CIFAR-100} \\ \toprule\hline
% \multicolumn{11}{|c|}{Post-Process Defence} \\ \hline
\multicolumn{1}{|c|}{Defense}                  & \multicolumn{1}{c|}{Model}                  & \multicolumn{1}{c|}{K-student} & LS & KD    & MKD   & DKD   & DIST  & HTC   & AVG   & \multicolumn{1}{c|}{Knockoff}  \\ \hline
\multicolumn{1}{|c|}{\multirow{4}{*}{MAD}}    & \multicolumn{1}{c|}{\multirow{2}{*}{VGG16}} & \multicolumn{1}{c|}{VGG11}   & 71.94$\pm$0.09 & 68.55$\pm$0.20 & 72.08$\pm$0.29 & 53.32$\pm$0.38 & 69.21$\pm$0.09 & 71.19$\pm$0.03 & 70.03$\pm$0.07 & \multicolumn{1}{c|}{61.44$\pm$0.06 }  \\ \cline{3-3}
\multicolumn{1}{|c|}{}                        & \multicolumn{1}{c|}{}                       & \multicolumn{1}{c|}{SNV2}    & 72.65$\pm$0.18 & 72.50$\pm$0.11 & 72.46$\pm$0.13 & 7.64 \ \ $\pm$ 0.70 & 69.91 $\pm$ 0.18 & 71.37$\pm$0.24 & 72.86$\pm$0.18 & \multicolumn{1}{c|}{70.87$\pm$0.26 }  \\ \cline{2-11} 
\multicolumn{1}{|c|}{}                        & \multicolumn{1}{c|}{\multirow{2}{*}{RN50}}  & \multicolumn{1}{c|}{VGG11}   & 71.94$\pm$0.09 & 72.00$\pm$0.21 & 72.04$\pm$0.13 & 54.29$\pm$0.52 & 71.57$\pm$0.31 & 70.76$\pm$0.20 & 70.73$\pm$0.22 & \multicolumn{1}{c|}{61.73$\pm$0.23 } \\ \cline{3-3}
\multicolumn{1}{|c|}{}                        & \multicolumn{1}{c|}{}                       & \multicolumn{1}{c|}{RN18}    & 78.76$\pm$0.08 & 77.76$\pm$0.23 & 78.79$\pm$0.23 & 43.73$\pm$0.55 & 73.76$\pm$0.10 & 77.89$\pm$0.25 & 78.61$\pm$0.15 & \multicolumn{1}{c|}{73.92$\pm$0.19 }  \\ \hline
\multicolumn{1}{|c|}{\multirow{4}{*}{APGP}}   & \multicolumn{1}{c|}{\multirow{2}{*}{VGG16}} & \multicolumn{1}{c|}{VGG11}   & 71.94$\pm$0.09 & 71.92$\pm$0.19 & 72.27$\pm$0.21 & 27.24$\pm$0.54 & 69.25$\pm$0.14 & 70.08$\pm$0.23 & 72.01$\pm$0.20 & \multicolumn{1}{c|}{45.98$\pm$0.45 } \\ \cline{3-3}
\multicolumn{1}{|c|}{}                        & \multicolumn{1}{c|}{}                       & \multicolumn{1}{c|}{SNV2}    & 72.65$\pm$0.18 & 73.10$\pm$0.27 & 73.75$\pm$0.23 & 12.52$\pm$0.30 & 71.04$\pm$0.31 & 71.66$\pm$0.13 & 73.20$\pm$0.27 & \multicolumn{1}{c|}{9.48 \ \ $\pm$ 0.73 }  \\ \cline{2-11} 
\multicolumn{1}{|c|}{}                        & \multicolumn{1}{c|}{\multirow{2}{*}{RN50}}  & \multicolumn{1}{c|}{VGG11}   & 71.94$\pm$0.09 & 71.91 $\pm$ 0.17 & 72.11 $\pm$ 0.23 & 9.74 \ \ $\pm$ 0.86 & 69.48 $\pm$ 0.11 & 71.38 $\pm$ 0.25 & 71.92 $\pm$ 0.14 & \multicolumn{1}{c|}{34.71 $\pm$ 0.30 } \\ \cline{3-3}
\multicolumn{1}{|c|}{}                        & \multicolumn{1}{c|}{}                       & \multicolumn{1}{c|}{RN18}    & 78.76 $\pm$ 0.08 & 78.04 $\pm$ 0.21 & 79.06 $\pm$ 0.14 & 62.71 $\pm$ 0.29 & 77.32 $\pm$ 0.13 & 77.82 $\pm$ 0.13 & 77.90 $\pm$ 0.15 & \multicolumn{1}{c|}{2.57 \ \ $\pm$ 0.95 } \\ \hline

\multicolumn{1}{|c|}{\multirow{4}{*}{RSP}}    & \multicolumn{1}{c|}{\multirow{2}{*}{VGG16}} & \multicolumn{1}{c|}{VGG11}   & 71.94 $\pm$ 0.09 & 71.42 $\pm$ 0.24 & 72.04 $\pm$ 0.13 & 70.22 $\pm$ 0.19 & 70.80 $\pm$ 0.17 & 70.40 $\pm$ 0.17 & 71.56 $\pm$ 0.06 & \multicolumn{1}{c|}{31.04 $\pm$ 0.62 } \\ \cline{3-3}
\multicolumn{1}{|c|}{}                        & \multicolumn{1}{c|}{}                       & \multicolumn{1}{c|}{SNV2}    & 72.65 $\pm$ 0.18 & 73.55 $\pm$ 0.34 & 72.95 $\pm$ 0.36 & 67.45 $\pm$ 0.20 & 72.19 $\pm$ 0.44 & 71.46 $\pm$ 0.41 & 72.27 $\pm$ 0.27 & \multicolumn{1}{c|}{26.09 $\pm$ 0.40} \\ \cline{2-11} 
\multicolumn{1}{|c|}{}                        & \multicolumn{1}{c|}{\multirow{2}{*}{RN50}}  & \multicolumn{1}{c|}{VGG11}   & 71.94 $\pm$ 0.09 & 71.97 $\pm$ 0.17 & 72.01 $\pm$ 0.20 & 69.53 $\pm$ 0.12 & 72.18 $\pm$ 0.21 & 70.87 $\pm$ 0.14 & 70.85 $\pm$ 0.17 & \multicolumn{1}{c|}{46.68 $\pm$ 0.60 } \\ \cline{3-3}
\multicolumn{1}{|c|}{}                        & \multicolumn{1}{c|}{}                       & \multicolumn{1}{c|}{RN18}    & 78.76 $\pm$ 0.08 & 77.78 $\pm$ 0.09 & 77.79 $\pm$ 0.16 & 77.01 $\pm$ 0.09 & 78.88 $\pm$ 0.21 & 78.00 $\pm$ 0.26 & 78.13 $\pm$ 0.12 & \multicolumn{1}{c|}{55.86 $\pm$ 0.18 } \\ \hline

\multicolumn{1}{|c|}{\multirow{4}{*}{NT}}     & \multicolumn{1}{c|}{\multirow{2}{*}{VGG16}} & \multicolumn{1}{c|}{VGG11}   & 71.94 $\pm$ 0.09 & 71.40 $\pm$ 0.34 & 73.44 $\pm$ 0.16 & 71.47 $\pm$ 0.14 & 71.33 $\pm$ 0.18 & 70.77 $\pm$ 0.23 & 71.58 $\pm$ 0.09 & \multicolumn{1}{c|}{63.56 $\pm$ 0.16 } \\ \cline{3-3}
\multicolumn{1}{|c|}{}                        & \multicolumn{1}{c|}{}                       & \multicolumn{1}{c|}{SNV2}    & 72.65 $\pm$ 0.18 & 72.44 $\pm$ 0.43 & 72.70 $\pm$ 0.35 & 6.24 \ 
 \ $\pm$ 0.51 & 72.04 $\pm$ 0.19 & 70.75 $\pm$ 0.13 & 72.83 $\pm$ 0.20 & \multicolumn{1}{c|}{6.32 $\pm$ 0.26 } \\ \cline{2-11} 
\multicolumn{1}{|c|}{}                        & \multicolumn{1}{c|}{\multirow{2}{*}{RN50}}  & \multicolumn{1}{c|}{VGG11}   & 71.94 $\pm$ 0.09 & 72.01 $\pm$ 0.25 & 72.03 $\pm$ 0.19 & 71.55 $\pm$ 0.36 & 71.88 $\pm$ 0.31 & 70.16 $\pm$ 0.29 & 71.94 $\pm$ 0.18 & \multicolumn{1}{c|}{62.94 $\pm$ 0.24 } \\ \cline{3-3}
\multicolumn{1}{|c|}{}                        & \multicolumn{1}{c|}{}                       & \multicolumn{1}{c|}{RN18}    & 78.76 $\pm$ 0.08 & 78.41 $\pm$ 0.25 & 78.92 $\pm$ 0.14 & 79.26 $\pm$ 0.29 & 78.99 $\pm$ 0.14 & 77.94 $\pm$ 0.22 & 78.33 $\pm$ 0.05 & \multicolumn{1}{c|}{68.96 $\pm$ 0.18 } \\ \hline

\multicolumn{1}{|c|}{\multirow{4}{*}{SNT}}    & \multicolumn{1}{c|}{\multirow{2}{*}{VGG16}} & \multicolumn{1}{c|}{VGG11}   & 71.94 $\pm$ 0.09 & 72.06 $\pm$ 0.22 & 72.28 $\pm$ 0.12 & 4.92 \ \ $\pm$ 0.22 & 71.98 $\pm$ 0.18 & 70.60 $\pm$ 0.13 & 71.63 $\pm$ 0.10 & \multicolumn{1}{c|}{64.08 $\pm$ 0.19 } \\ \cline{3-3}
\multicolumn{1}{|c|}{}                        & \multicolumn{1}{c|}{}                       & \multicolumn{1}{c|}{SNV2}    & 72.65 $\pm$ 0.18 & 72.94 $\pm$ 0.41 & 73.17 $\pm$ 0.13 & 72.78 $\pm$ 0.20 & 72.22 $\pm$ 0.24 & 71.22 $\pm$ 0.18 & 72.74 $\pm$ 0.20 & \multicolumn{1}{c|}{6.22 \ \ $\pm$ 0.59 }  \\ \cline{2-11} 
\multicolumn{1}{|c|}{}                        & \multicolumn{1}{c|}{\multirow{2}{*}{RN50}}  & \multicolumn{1}{c|}{VGG11}   & 71.94 $\pm$ 0.09 & 72.02 $\pm$ 0.19 & 72.12 $\pm$ 0.39 & 72.32 $\pm$ 0.33 & 71.70 $\pm$ 0.39 & 70.66 $\pm$ 0.17 & 71.65 $\pm$ 0.20 & \multicolumn{1}{c|}{62.94 $\pm$ 0.29 } \\ \cline{3-3}
\multicolumn{1}{|c|}{}                        & \multicolumn{1}{c|}{}                       & \multicolumn{1}{c|}{RN18}    & 78.76 $\pm$  0.08 & 78.25 $\pm$ 0.05 & 78.48 $\pm$ 0.24 & 78.82 $\pm$ 0.30 & 78.14 $\pm$ 0.28 & 78.45 $\pm$ 0.15 & 78.38 $\pm$ 0.13 & \multicolumn{1}{c|}{67.71 $\pm$ 0.20 } \\ \hline

\multicolumn{1}{|c|}{\multirow{4}{*}{ST}}    & \multicolumn{1}{c|}{\multirow{2}{*}{VGG16}} & \multicolumn{1}{c|}{VGG11}    & 71.94 $\pm$ 0.09 & 72.09 $\pm$ 0.23 & 72.01 $\pm$ 0.14 & 71.63 $\pm$ 0.16 & 71.93 $\pm$ 0.12 & 71.16 $\pm$ 0.27 & 71.63 $\pm$ 0.18 & \multicolumn{1}{c|}{63.32 $\pm$ 0.14 } \\ \cline{3-3}
\multicolumn{1}{|c|}{}                        & \multicolumn{1}{c|}{}                       & \multicolumn{1}{c|}{SNV2}    & 72.65 $\pm$ 0.18 & 72.64 $\pm$ 0.15 & 72.67 $\pm$ 0.21 & 70.53 $\pm$ 0.48 & 72.24 $\pm$ 0.39 & 71.32 $\pm$ 0.38 & 72.42 $\pm$ 0.12 & \multicolumn{1}{c|}{69.46 $\pm$ 0.28} \\ \cline{2-11} 
\multicolumn{1}{|c|}{}                        & \multicolumn{1}{c|}{\multirow{2}{*}{RN50}}  & \multicolumn{1}{c|}{VGG11}   & 71.94 $\pm$ 0.09 & 72.00 $\pm$ 0.19 & 72.13 $\pm$ 0.13 & 71.62 $\pm$ 0.24 & 71.76 $\pm$ 0.29 & 70.54 $\pm$ 0.33 & 71.73 $\pm$ 0.11 & \multicolumn{1}{c|}{65.43 $\pm$ 0.24 } \\ \cline{3-3}
\multicolumn{1}{|c|}{}                        & \multicolumn{1}{c|}{}                       & \multicolumn{1}{c|}{RN18}    & 78.76 $\pm$ 0.08 & 78.96 $\pm$ 0.26 & 79.02 $\pm$ 0.06 & 78.35 $\pm$ 0.09 & 78.31 $\pm$ 0.14 & 78.36 $\pm$ 0.25 & 78.81 $\pm$ 0.14 & \multicolumn{1}{c|}{72.87 $\pm$ 0.08 } \\ \hline

\multicolumn{1}{|c|}{\multirow{4}{*}{LS}}     & \multicolumn{1}{c|}{\multirow{2}{*}{VGG16}} & \multicolumn{1}{c|}{VGG11}   & 71.94 $\pm$ 0.09 & 71.90 $\pm$ 0.18 & 72.00 $\pm$ 0.06 & 71.57 $\pm$ 0.26 & 70.89 $\pm$ 0.12 & 70.66 $\pm$ 0.17 & 71.76 $\pm$ 0.10 & \multicolumn{1}{c|}{63.49 $\pm$ 0.21 } \\ \cline{3-3}
\multicolumn{1}{|c|}{}                        & \multicolumn{1}{c|}{}                       & \multicolumn{1}{c|}{SNV2}    & 72.65 $\pm$ 0.18 & 72.87 $\pm$ 0.28 & 73.52 $\pm$ 0.25 & 70.01 $\pm$ 0.32 & 71.49 $\pm$ 0.38 & 71.70 $\pm$ 0.42 & 73.01 $\pm$ 0.27 & \multicolumn{1}{c|}{65.20 $\pm$ 0.14 }  \\ \cline{2-11} 
\multicolumn{1}{|c|}{}                        & \multicolumn{1}{c|}{\multirow{2}{*}{RN50}}  & \multicolumn{1}{c|}{VGG11}   & 71.94 $\pm$ 0.09 & 71.82 $\pm$ 0.28 & 71.99 $\pm$ 0.16 & 71.95 $\pm$ 0.33 & 70.77 $\pm$ 0.39 & 70.86 $\pm$ 0.24 & 71.88 $\pm$ 0.16 & \multicolumn{1}{c|}{62.29 $\pm$ 0.10 } \\ \cline{3-3}
\multicolumn{1}{|c|}{}                        & \multicolumn{1}{c|}{}                       & \multicolumn{1}{c|}{RN18}    & 78.76 $\pm$ 0.08 & 77.72 $\pm$ 0.30 & 77.82 $\pm$ 0.12 & 79.37 $\pm$ 0.19 & 78.33 $\pm$ 0.06 & 78.31 $\pm$ 0.21 & 77.91 $\pm$ 0.07 & \multicolumn{1}{c|}{63.36 $\pm$ 0.17 } \\ \hline

\multicolumn{1}{|c|}{\multirow{4}{*}{CMIM}} & \multicolumn{1}{c|}{\multirow{2}{*}{VGG16}}   & \multicolumn{1}{c|}{VGG11}   & 71.94 $\pm$ 0.09 & 71.87 $\pm$ 0.24 & 71.64 $\pm$ 0.07 & 71.56 $\pm$ 0.03 & 70.34 $\pm$ 0.09 & 71.71 $\pm$ 0.14 & 71.42 $\pm$ 0.05 & \multicolumn{1}{c|}{66.89 $\pm$ 0.11 } \\ \cline{3-3}
\multicolumn{1}{|c|}{}                        & \multicolumn{1}{c|}{}                       & \multicolumn{1}{c|}{SNV2}    & 72.65 $\pm$ 0.18 & 72.53 $\pm$ 0.21 & 71.44 $\pm$ 0.16 & 72.46 $\pm$ 0.20 & 71.45 $\pm$ 0.31 & 71.59 $\pm$ 0.24 & 71.94 $\pm$ 0.20 & \multicolumn{1}{c|}{64.45 $\pm$ 0.24 } \\ \cline{2-11} 
\multicolumn{1}{|c|}{}                        & \multicolumn{1}{c|}{\multirow{2}{*}{RN50}}  & \multicolumn{1}{c|}{VGG11}   & 71.94 $\pm$ 0.09 & 71.54 $\pm$ 0.30 & 71.34 $\pm$ 0.16 & 71.77 $\pm$ 0.06 & 71.86 $\pm$ 0.28 & 69.32 $\pm$ 0.09 & 71.70 $\pm$ 0.22 & \multicolumn{1}{c|}{60.58 $\pm$ 0.17 } \\ \cline{3-3}
\multicolumn{1}{|c|}{}                        & \multicolumn{1}{c|}{}                       & \multicolumn{1}{c|}{RN18}    & 78.76 $\pm$ 0.08 & 78.21 $\pm$ 0.13 & 78.16 $\pm$ 0.09 & 78.13 $\pm$ 0.06 & 77.56 $\pm$ 0.06 & 77.23 $\pm$ 0.09 & 78.64 $\pm$ 0.06 & \multicolumn{1}{c|}{65.88 $\pm$ 0.09 } \\  \toprule\hline

\rowcolor{mygray} \multicolumn{11}{|c|}{TinyImageNet} \\ \toprule\hline 

\multicolumn{1}{|c}{\multirow{2}{*}{RSP}} & \multicolumn{1}{|c|}{RN34} & \multicolumn{1}{|c|}{RN18}   & 63.56 $\pm$ 0.06 & 63.54 $\pm$ 0.09 & 64.32 $\pm$ 0.07 & 64.01 $\pm$ 0.07 & 63.27 $\pm$ 0.16 & 63.54 $\pm$ 0.07 & 62.15 $\pm$ 0.14 & \multicolumn{1}{c|}{55.43 $\pm$ 0.12 } \\ \cline{2-11}
                                          & \multicolumn{1}{|c|}{RN50} & \multicolumn{1}{|c|}{SNV2}   & 60.61 $\pm$ 0.15 & 60.18 $\pm$ 0.26 & 60.76 $\pm$ 0.16 & 56.26 $\pm$ 0.16 & 56.43 $\pm$ 0.11 & 60.96 $\pm$ 0.22 & 60.15 $\pm$ 0.20 & \multicolumn{1}{c|}{54.01 $\pm$ 0.22 }  \\ \cline{2-11} \hline

 \multicolumn{1}{|c}{\multirow{2}{*}{ST}} & \multicolumn{1}{|c|}{RN34} & \multicolumn{1}{|c|}{RN18}   & 63.56 $\pm$ 0.06 & 63.96 $\pm$ 0.13 & 64.12 $\pm$ 0.07 & 63.25 $\pm$ 0.10 & 63.51 $\pm$ 0.14 & 63.49 $\pm$ 0.19 & 63.84 $\pm$ 0.08 & \multicolumn{1}{c|}{57.42 $\pm$ 0.08 } \\ \cline{2-11}
                                          & \multicolumn{1}{|c|}{RN50} & \multicolumn{1}{|c|}{SNV2}   & 60.61 $\pm$ 0.15 & 61.23 $\pm$ 0.24 & 61.36 $\pm$ 0.14 & 60.43 $\pm$ 0.14 & 60.32 $\pm$ 0.24 & 60.22 $\pm$ 0.17 & 61.13 $\pm$ 0.13 & \multicolumn{1}{c|}{55.84 $\pm$ 0.11 } \\ \cline{2-11} \hline

\multicolumn{1}{|c}{\multirow{2}{*}{NT}}  & \multicolumn{1}{|c|}{RN34} & \multicolumn{1}{|c|}{RN18}   & 63.56 $\pm$ 0.06 & 63.27 $\pm$ 0.14 & 64.49 $\pm$ 0.17 & 64.67 $\pm$ 0.16 & 63.43 $\pm$ 0.20 & 63.50 $\pm$ 0.10 & 64.43 $\pm$ 0.11 & \multicolumn{1}{c|}{53.11 $\pm$ 0.06 }  \\ \cline{2-11}
                                          & \multicolumn{1}{|c|}{RN50} & \multicolumn{1}{|c|}{SNV2}   & 60.61 $\pm$ 0.15 & 59.57 $\pm$ 0.22 & 61.55 $\pm$ 0.12 & 31.55 $\pm$ 0.28 & 60.03 $\pm$ 0.23 & 60.98 $\pm$ 0.27 & 60.31 $\pm$ 0.17 & \multicolumn{1}{c|}{50.94 $\pm$ 0.15 } \\ \cline{2-11} \hline

\multicolumn{1}{|c}{\multirow{2}{*}{LS}}  & \multicolumn{1}{|c|}{RN34}  & \multicolumn{1}{|c|}{RN18}  & 63.56 $\pm$ 0.06 & 63.74 $\pm$ 0.08 & 64.01 $\pm$ 0.14 & 64.23 $\pm$ 0.11 & 63.51 $\pm$ 0.07 & 64.20 $\pm$ 0.16 & 63.04 $\pm$ 0.13 & \multicolumn{1}{c|}{57.43 $\pm$ 0.10 } \\ \cline{2-11}
                                          & \multicolumn{1}{|c|}{RN50}  & \multicolumn{1}{|c|}{SNV2}  & 60.61 $\pm$ 0.15 & 60.32 $\pm$ 0.24 & 60.93 $\pm$ 0.15 & 60.74 $\pm$ 0.26 & 60.11 $\pm$ 0.28 & 60.46 $\pm$ 0.14 & 60.14 $\pm$ 0.21 & \multicolumn{1}{c|}{52.96 $\pm$ 0.23 } \\ \cline{2-11} \hline

\multicolumn{1}{|c}{\multirow{2}{*}{CMIM}}& \multicolumn{1}{|c|}{RN34} & \multicolumn{1}{|c|}{RN18}   & 63.53 $\pm$ 0.06 & 62.89 $\pm$ 0.03 & 63.15 $\pm$ 0.08 & 62.94 $\pm$ 0.03 & 63.28 $\pm$ 0.05 & 61.57 $\pm$ 0.06 & 62.96 $\pm$ 0.06 & \multicolumn{1}{c|}{56.13 $\pm$ 0.04 } \\ \cline{2-11}
                                          & \multicolumn{1}{|c|}{RN50}  & \multicolumn{1}{|c|}{SNV2}  & 60.61 $\pm$ 0.15 & 57.57 $\pm$ 0.20 & 59.32 $\pm$ 0.17 & 60.58 $\pm$ 0.12 & 59.41 $\pm$ 0.09 & 59.33 $\pm$ 0.10 & 60.42 $\pm$ 0.04 & \multicolumn{1}{c|}{56.91 $\pm$ 0.05 } \\ \cline{2-11} \hline
\end{tabular}}
\vskip -0.4in
\end{center}
\end{table}

Upon reviewing the variance estimates, we see that certain cases, such as the (RN50, VGG11) pair on CIFAR-100, might suggest that the CMIM-trained model could be rendered distillable under naive statistical interpretations. For example, when the DIST method is applied to the CMIM model, the accuracy is reported as, which could potentially exceed the LS student accuracy of if variance is simply added to the mean.

However, this approach of directly comparing mean values with added variances does not provide an accurate or fair assessment of undistillability. To address this, we conducted a more rigorous analysis where, across five different seeds, we compared the accuracy of the knock-off Student (VGG11 in this case) trained via label smoothing and the DIST method applied to RN50 trained with CMIM.

The results of this comprehensive comparison are summarized in the following table, which demonstrates that the CMIM model achieves undistillability across all different seeds.

\begin{table}[h!]
\caption{Variance analysis of \cref{Tab:VarCIFAR100}}
\begin{center}
\vskip -0.2in
\resizebox{0.5\textwidth}{!}{\begin{tabular}{|c|c|c|c|c|c|}
\hline \rowcolor{mygray} 
& Seed 1& Seed 2 & Seed 3 & Seed 4 & Seed 5    \\  \toprule \hline
LS  & 72.28	&71.83	&72.06	&72.25	&71.53 \\
CMIM & 72.01	&71.74	&71.93&	72.12	&71.27 \\ \hline
\end{tabular}}
\end{center}
\end{table}

\section{Computational Overhead} \label{sec:computation}

In this section we compare the computational overhead of our method with the CE counterpart on CIFAR-100 dataset.

\begin{table}[h!]
\caption{Training times of CMIM and CE for ResNet-10 and VGG-16 on the CIFAR-100 dataset.}
\begin{center}
\vskip -0.2in
\begin{tabular}{|c|c|c|}
\hline \rowcolor{mygray} 
      & CE                 & CMIM               \\  \toprule \hline
RN50  & 4 hours 43 minutes & 5 hours 13 minutes \\
VGG16 & 2 hours 57 minutes & 3 hours 25 minutes \\ \hline
\end{tabular}
\end{center}
\end{table}

Note that the training time for CMIM is slightly higher than that of conventional CE method. This is primarily due to the additional inference samples required to estimate the CMI per class. We believe this is a reasonable trade-off given the significant benefits of the method. In addition, note that the number of samples $N$ does not have any effects on the training time CMIM; this is because the power transform applied to the teacher's output probabilities, and when calculating the gradients during the backpropagation, different values of $\alpha$
does not change the gradients. 

Additionally, we note that among the existing KD protection methods in the literature, only CMIM (our method) and ST are scalable to larger datasets like ImageNet. This scalability is due to the significant computational complexity of other benchmark methods, which limits their applicability to smaller datasets.

\section{Ablation on Hyper-parameters} \label{sec:abl}
In this section, we perform an ablation study to analyze the impact of three key hyper-parameters of CMIM: \(\beta\), the number of samples \(N\), and \(\omega\). 

For all experiments in this study, we employ the VGG-16 and SNV2 models as the teacher-student pair, using the CIFAR-100 dataset as the evaluation benchmark.

\subsection{Range of $\beta$}
In this section, we examine the impact of \(\beta\) on the performance of CMIM. For this analysis, we fix \(N=50\) and \(\omega=25\), while varying the value of \(\beta\). The results are shown in \cref{fig:beta} show that the accuracy of the knockoff student decreases significantly when \(\beta \geq 1\), highlighting 
% the sensitivity of the model to higher values of \(\beta\).
the importance of choosing an appropriate $\beta$ to effectively balance cross-entropy and CMI objectives.
\begin{figure}[h!]  
	\centering
	\includegraphics[width=\columnwidth]{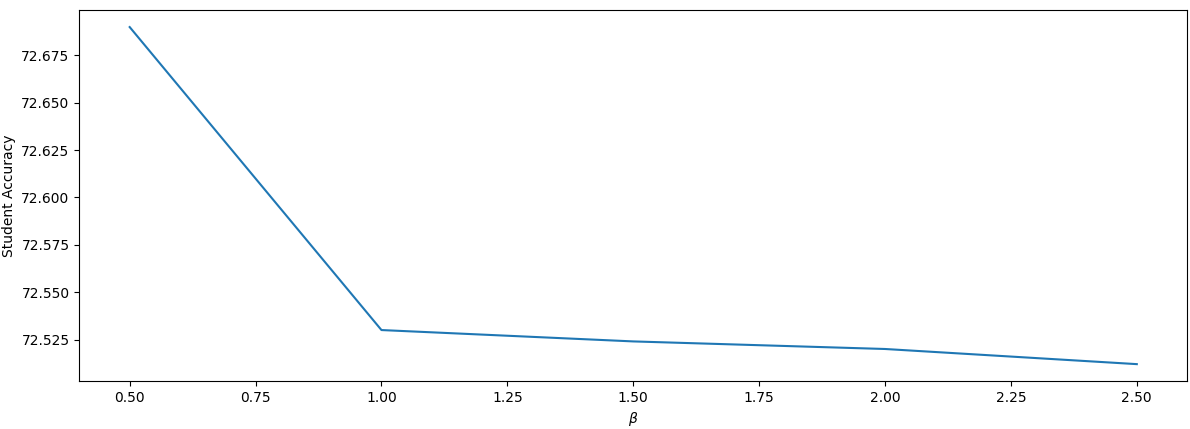}
    \caption{The student accuracy when distilled from the teacher model trained by different $\beta$ values.} \label{fig:beta}
\end{figure}

\subsection{Number of power samples $N$}
In this section, we analyze the influence of the number of samples \(N\) on the performance of the knockoff student. For this study, we set \(\beta = 2\) and \(\omega = 25\), varying \(N\) to observe its impact. The results, presented in \cref{fig:N}, reveal that the accuracy of the knockoff student declines monotonically as \(N\) increases, suggesting that a moderate sampling size is enable to capture sufficient diversity for robust estimation.

\begin{figure}[h!]  
	\centering
	\includegraphics[width=\columnwidth]{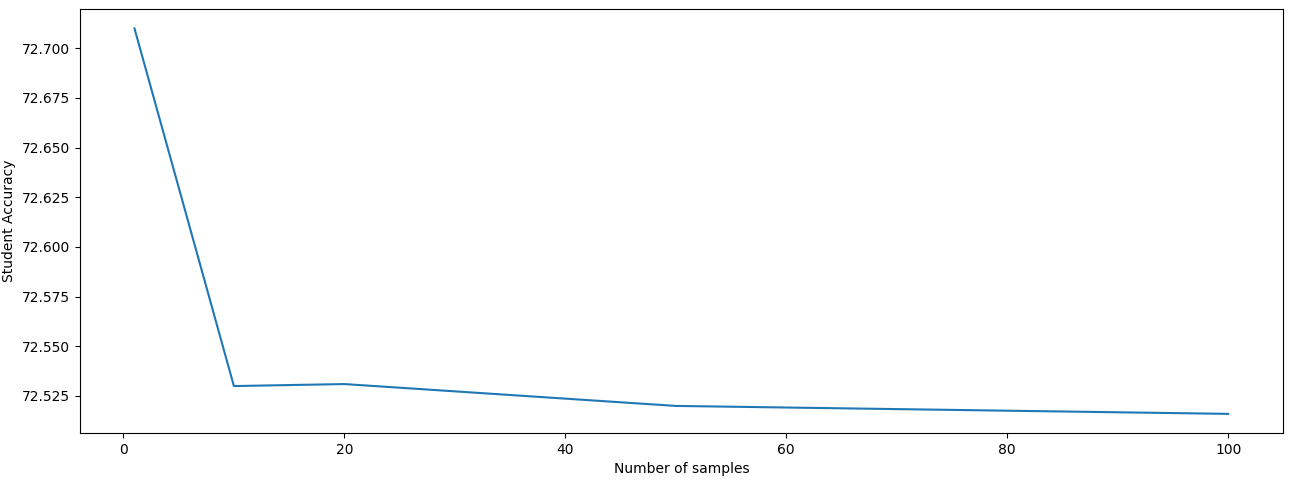}
    \caption{The student accuracy when distilled from the teacher model trained by different number of sample $N$.} \label{fig:N}
\end{figure}

\subsection{Power Coefficient $\omega$}
In this section, we investigate the effect of the power coefficient \(\omega\) on the knockoff student's performance. For this analysis, we fix \(\beta = 2\) and \(N = 25\), while varying \(\omega\). The results are summarized in \cref{tab:omega}. Notably, when \(\omega > 30\), the simulation frequently results in NaN values due to excessively large exponent values. Furthermore, the table shows that at \(\omega = 20\) or \(\omega = 30\), the knockoff student's accuracy reaches its minimum, indicating that this value effectively approximates the behavior of \(\omega = \infty\).

\begin{table}[!h]
\caption{The student accuracy when distilled from the teacher model trained by different values of $\omega$.} 
\vskip -0.1in
\resizebox{1\textwidth}{!}{\begin{tabular}{|c|l|l|l|l|l|l|l|l|l|l|l|l|}
\hline \rowcolor{mygray} 
Value of $\omega$          & 1     & 2     & 5     & 10    & 15    & 20    & 25    & 30    & 40  & 50    & 100 & 200 \\  \toprule \hline
Knock-off Student Accuracy & 72.65 & 72.67 & 72.55 & 72.56 & 72.55 & 72.52 & 72.53 & 72.52 & NaN & NaN & NaN & NaN \\ \hline
\end{tabular}} \label{tab:omega}
\end{table}

\clearpage
\section{Visualization of Different Three Projected Probability Clusters}
\label{Appendix:MoreSimplex}

\begin{figure*}[!ht] 
% \vskip -0.2in
	\centering
	\subfloat[LS]{\includegraphics[width=0.30\columnwidth]{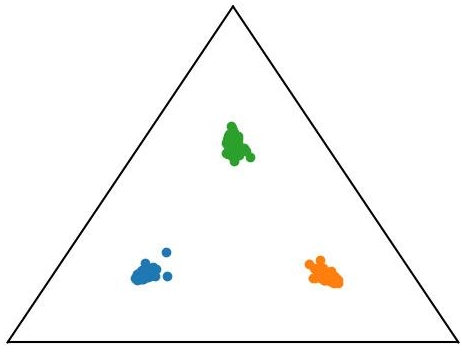}\label{fig:LS_a}} 
	\subfloat[Nasty teacher] {\includegraphics[width=0.30\columnwidth]{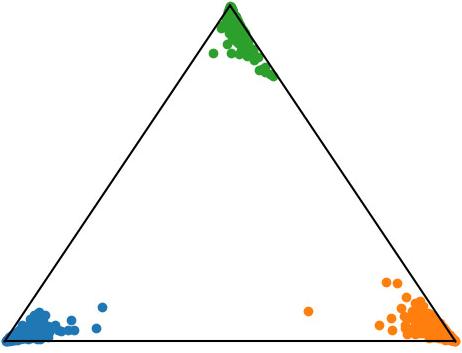}\label{fig:Nastya}}
 \subfloat[CMIM]{\includegraphics[width=0.30\columnwidth]{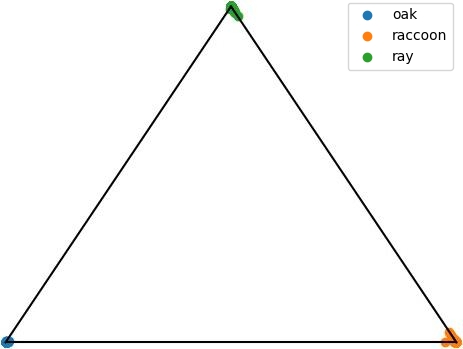}\label{fig:CMIMa}}
	\vskip -0.1in
    \caption{Visualization of three projected probability clusters for ResNet-50 trained on CIFAR-100 using (a) LS, (b) NT, and (c) CMIM.}
    \label{fig:Simplex}
	% \vskip -0.2in
\end{figure*}
In this section, we randomly selected three additional output clusters from the CIFAR-100 dataset, oak, raccoon, and ray, and visualized them using the method introduced in \cite{10900607}. The same conclusion continues to hold.

\section{Effect of CMIM on Model Calibration}
\label{Appendix:Caliberation}
 In this section, we investigate the impact of the CMIM method on model calibration. To do so, we report the Expected Calibration Error (ECE) for ResNet-50 trained with CE, LS, NT and CMIM on the CIFAR-100 dataset. As shown in Table \ref{Tab:Calibration}, the CMIM-trained model achieves calibration on par with its CE-trained counterpart and consistently outperforms both the LS and NT counterparts.

\begin{table}[!h]
\centering
\caption{Expected calibration error of three projected probability clusters for ResNet-50 trained on CIFAR-100 using CE, LS, NT and CMIM.}
% \vskip -0.1in
\label{Tab:Calibration}
\begin{tabular}{|c|cccc|}
\hline \rowcolor{mygray} 
Defense & CE     & LS      & NT      & CMIM    \\ \toprule\hline
ECE     & 9.42\% & 21.48\% & 16.47\% & 11.24\% \\ \hline 
Accuracy     & 77.81\% & 78.45\% & 77.31\% & 78.72\% \\ \hline 
\end{tabular}
\end{table}

\end{document}